\documentclass[final,11pt,noinfoline]{imsart}

%ArXiv number
\arxiv{}

%Packages

\usepackage[english]{babel}
\usepackage[latin1]{inputenc}
\usepackage{textcomp}
\usepackage{dsfont}
\usepackage{color}
\usepackage{xcolor}
\usepackage{graphics}
\usepackage{graphicx}
\usepackage{epstopdf}
\usepackage{amsmath,amsthm,amssymb,amsfonts}
\usepackage{mathrsfs}
\usepackage[garamont]{mathdesign}
\usepackage{enumitem}
\usepackage[norelsize,ruled,vlined,commentsnumbered]{algorithm2e}
\usepackage{multirow}
\usepackage[final,activate,verbose=true,auto=true]{microtype}
\usepackage{a4wide}
\usepackage{bbm}
\usepackage{mathtools}
\usepackage{hyperref}
\usepackage{algorithmic}         
\usepackage{algorithm2e} 
\usepackage[mathscr]{eucal}

\hypersetup{colorlinks=true,linkcolor=blue,citecolor=blue,linktoc=page}

%additional commands
\newcommand{\eq}{\begin{equation}}
\newcommand{\qe}{\end{equation}}

                %natural integers
                     %real numbers

\DeclareMathOperator*{\argmax}{arg\,max}

%\DeclareMathAlphabet{\pazocal}{OMS}{zplm}{m}{n}

\newtheorem{theorem}{Theorem}%[section]
\newtheorem{lemma}[theorem]{Lemma}
\newtheorem{cor}[theorem]{Corollary}
\newtheorem{proposition}[theorem]{Proposition}

\newtheorem{remark}{Remark}

%opening
\pagestyle{headings}

\date{\today}

%%%%%%%%%%%%%%%%%%%%%%Document%%%%%%%%%%%%%%%%%%%%%%%%%%
\begin{document}
\sloppy

%Frontmatter
\begin{frontmatter}

\title{Latent Distance Estimation for 
           Random Geometric Graphs \thanks{Accepted to NeurIPS 2019}}
\runtitle{ Latent Distance Estimation for 
           Random Geometric Graphs}
\begin{aug}
\author{\fnms{Ernesto} \snm{Araya Valdivia}\ead[label=e1]{ernesto.araya-valdivia@math.u-psud.fr}}
\affiliation{Université Paris-Sud}
\address{Laboratoire de Math\'ematiques d'Orsay (LMO)\\ Universit\'e Paris-Sud \\ 91405 Orsay Cedex \\France}
\author{\fnms{Yohann} \snm{De Castro}\ead[label=e1]{Yohann.de-castro@enpc.fr}}
\affiliation{Ecole des Ponts ParisTech-CERMICS}
\address{Ecole des Ponts ParisTech-CERMICS\\6 et 8 avenue Blaise Pascal, Cit\'e Descartes\\ Champs sur Marne, 77455 Marne la Vall\'ee, Cedex 2\\France}
\runauthor{Araya-De Castro}

\end{aug}
%%%%%%%%%%%%%%%%%%%%%%%%%%%%%%%%%%%%%%%%%%%%%%%%%%%%

%%%%%%%%%%%%%%%%%%%%%%abstract%%%%%%%%%%%%%%%%%%%%%%%%%%%
\begin{abstract}
Random geometric graphs are a popular choice for a latent points generative model for networks. Their definition is based on a sample of $n$ points $X_1,X_2,\cdots,X_n$ on the Euclidean sphere~$\mathbb{S}^{d-1}$ which represents the latent positions of nodes of the network. The connection probabilities between the nodes are determined by an unknown function (referred to as the ``link'' function) evaluated at the distance between the latent points. We introduce a spectral estimator of the pairwise distance between latent points and we prove that its rate of convergence is the same as the nonparametric estimation of a function on $\mathbb{S}^{d-1}$, up to a logarithmic factor. In addition, we provide an efficient spectral algorithm to compute this estimator without any knowledge on the nonparametric link function. As a byproduct, our method can also consistently estimate the dimension $d$ of the latent space.

\end{abstract}

\begin{keyword}[class=MSC]
\kwd[Primary ]{68Q32}
\kwd[; secondary ]{60F99}
\kwd{68T01}
%\kwd{??}
%\kwd{??}
\end{keyword}

\begin{keyword}
\kwd{Graphon model}
\kwd{Random Geometric Graph}
\kwd{Latent distances estimation}
\kwd{Latent position graph}
\kwd{Spectral methods}
\end{keyword}

\end{frontmatter}

%%%%%%%%%%%%%Title%%%%%%%%%%%%%%%%%%%%%%%%%%%%%%%%%%%%%%%
\maketitle
\section{Introduction}
\label{submission}

Random geometric graph (RGG) models have received attention lately as alternative to some simpler yet unrealistic models as the ubiquitous Erd\"os-R\'enyi model \cite{Erdos}. They are generative latent point models for graphs, where it is assumed that each node has associated a latent point in a metric space (usually the Euclidean unit sphere or the unit cube in $\mathbb{R}^d$) and the connection probability between two nodes depends on the position of their associated latent points. In many cases, the connection probability depends only on the distance between the latent points and it is determined by a one-dimensional ``link'' function.  

Because of its geometric structure, this model is appealing for applications in wireless networks modeling \cite{Jia}, social networks \cite{Hoff} and biological networks \cite{protein}, to name a few. In many of these real-world networks, the probability that a tie exists between two agents (nodes) depends on the similarity of their profiles. In other words, the connection probability depends on some notion of distance between the position of the agents in a metric space, which in the social network literature has been called the \emph{social space}. 
%A frequent assumption is that a connection is more likely if the agents are close to each other with respect to the aforementioned distance, which in some cases can be interpreted as they having similar interests. This induces a large \emph{cluster coefficient} (a large number of $3$-cliques or ``triangles'') on the network, which is a property that many real-world networks have, see for example \cite{Gupta},\cite{Watts},\cite{Ugander}.  
% In \cite{Sarkar} the authors study the link prediction problem by assuming that a network has geometric representation, that is, the nodes are embedded in an Euclidean space and the distances represent the interests disimilarities between agents. A link is predicted to exists depending on the number of common neighbors of two nodes. This is related to the fact that many real-world networks have large \emph{cluster coefficient} (a large number of triangles), see for example \cite{Gupta},\cite{Watts},\cite{Ugander}. Geometric models do well in capturing this feature \cite{Bubeck16}. 

%The most common choice for the ``link'' function is the threshold function $f(t)=\mathbbm{1}_{t\leq \tau}$, where $\tau\in \mathbb{R}$.
In the classical RGG model, as introduced by Gilbert in \cite{Gilbert}, we consider $n$ independent and identically distributed latent points $\{X_i\}^n_{i=1}$ in $\mathbb{R}^d$ and the  construct the graph with vertex set $V=\{1,2,\cdots,n\}$, where the node $i$ and $j$ are connected if and only if the Euclidean distance $\|X_i-X_j\|_d$ is smaller that certain predefined threshold $\tau$. The classic reference on the classical RGG model, from the probabilistic point-view, is the monograph \cite{Pen}. Another good reference is the survey paper \cite{Walters}. In that case, the ``link" function, which we have not yet formally defined, is the \emph{threshold} function $\mathbbm{1}_{t\leq \tau}(t)$. That is, the connection probability between two points is one or zero depending if their distance is smaller or larger than $\tau$. In that case, all the randomness lies in the fact that we are sampling the latent points with a certain distribution. We choose to maintain the name of random geometric graphs for more general ``link" functions.% The model with a more general ``link'' function (possibly different from the threshold function) has also been called \emph{soft random geometric graph}, see \cite{Penrose} for example. Given that the use of this name is not widespread we choose to keep the name random geometric graphs for a general ``link'' function.  %dependwill be determined by the threshold function in the following way \[\mathbb{P}(i\sim j)=f(\rho(X_i,X_j))\] Thus $X_i$ and $X_j$ will be connected if and only if their distance is smaller than $\rho$. This model is the classical random geometric graph model, which was first introduced by Gilbert \cite{Gilbert} and many variations have appeared since then. The classic reference from the probabilistic pointview is the monograph \cite{Pen}. Another reference is the survey \cite{Walters}. The model with a more general ``link'' function (possibly different from the threshold function) has been called \emph{soft random geometric graph}, see \cite{Penrose} for example. Given that the use of this name is not widespread we choose to keep the name random geometric graphs for a general ``link'' function. 

%With this level of generality in the ``link'' function, the family of random geometric graphs include also models where the geometry will not play a role in determining the connection probability. For instance, if the ``link'' function is a constant, lets say $f(t)=1/2$, then the connection probability is the same, $1/2$ in this case, regardless of the node positions. Here we recognize the classical Erd\"os-Rényi model. Then our description of random geometric graphs includes the Erd\"os-Rényi model and the random geometric graph model with the threshold function. Those models can be very different in practice, depending on the relation between the dimension of the Euclidean space where they are embedded and the size of the graph itself. This is formalized in the context of hypothesis testing on networks in \cite{Bubeck16}, where the authors study the problem of distinguishing between the classical random geometric model and the Erd\"os-Rényi model based on a single observation of the network, using the total number of triangles as the statistic (they actually use a variation called the \emph{signed triangles} statistic). They show that if the dimension of the space satisfy a certain bound with respect to the size of the graph, we can distinguish both models with high probability, but if the dimension is large enough, both models become indistinguishable. The latter has also been proved in the asymptotic case, that is for a fixed size of the graph and the dimension tending to infinity, in \cite{Lugosi}.

We are interested in the problem of recovering the pairwise distances between the latent points $\{X_i\}^n_{i=1}$ for geometric graphs on the sphere $\mathbb{S}^{d-1}$ given an single observation of the network. We limit ourselves to the case when the network is a simple graph. Furthermore, we will assume that the dimension $d$ is fixed and that the ``link" function is not known. This problem and some related ones has been studied for different versions of the model and under a different set of hypothesis, see for example the recent work \cite{AriasCastro} and the references therein. In that work the authors propose a method for estimating the latent distances based on the graph theoretic distance between two nodes (that is the length of the shortest path that start in one node and finish on the other). Independently, in \cite{Mcdiarmid} the authors develop a similar approach which has slightly less recovery error, but for a less general model. In both cases, the authors consider the cube in $\mathbb{R}^d$ (or the whole $\mathbb{R}^{d}$) but not the sphere. Our strategy is similar to the one developed in \cite{Sussman}, where they considered the latent point estimation problem in the case of \emph{random dot product graphs}, which is a more restricted model compared to the one considered here. However, they considered more general Euclidean spaces and latent points distributions other than the uniform. Similar ideas has been used in the context vertex classification for latent position graphs \cite{TangSuss}.  % They also consider the related problem of 

We will use the notion of graphon function to formalize the concept of ``link" function.  %describe the generative model of random geometric graphs, through the $W$-random graph model \cite[Sec.10.1]{Lov}. 
Graphons are central objects to the theory of dense graph limits. They were introduced by Lov\'asz and Szegedy in \cite{Lovsze} and further developed in a series of papers, see \cite{Lova1},\cite{Lova2}. Formally, they are symmetric kernels that take values in $[0,1]$, thus they will act as the ``link'' function for the latent points. The spectrum of the graphon is defined as the spectrum of an associated integral operator, as in \cite[Chap.7]{Lov}. In this paper, they will play the role of limit models for the adjacency matrix of a graph, when the size goes to infinity. This is justified in light of the work of Koltchinskii and Gin\'e \cite{Kolt} and Koltchinskii \cite{Kolt2}. In particular, the adjacency matrix of the observed graph can be though as a finite perturbed version of this operator, combining results from \cite{Kolt} and \cite{BanVan}. %In the same way, its spectrum will be a perturbed version, in a sense we will precise later, of the spectrum of the integral operator. This will be crucial to establish the convergence of our method, which is of spectral nature itself.  %In addition, the spectrum of a graphon will be identified with the spectrum of the integral operator. This will be important, since our method is of spectral nature. 

We will focus on the case of dense graphs on the sphere $\mathbb{S}^{d-1}$ where the connection probability depends only on the geodesic distance between two nodes. This allows us to use the harmonic analysis on the sphere to have a nice characterization of the graphon spectrum, which has a very particular structure. More specifically, the following two key elements holds: first, the basis of eigenfunctions is fixed (do not depend on the particular graphon considered) and equal to the well-known spherical harmonic polynomials. Second, the multiplicity of each eigenvalue is determined by a sequence of integers that depends only on the dimension $d$ of the sphere and is given by a known formula and the associated eigenspaces are composed by spherical harmonics of the same polynomial degree.% Thus all the difference between two graphons is encoded in the specific value of their eigenvalues, since the eigenfunctions and multiplicities are prescribed. 

The graphon eigenspace composed only with linear eigenfunctions (harmonic polynomials of degree one) will play an important role in the latent distances matrix recovery as all the information we need to reconstruct the distances matrix is contained in those eigenfunctions. We will prove that it is possible to approximately recover this information from the observed adjacency matrix of the graph under regularity conditions (of the Sobolev type) on the graphon and assuming an eigenvalue gap condition (similar hypotheses are made in \cite{Chatterjee} in the context of matrix estimation and in \cite{ManLearning} in the context of manifold learning). We do this by proving that a suitable projection of the adjacency matrix, onto a space generated by exactly $d$ of its eigenvectors, approximates well the latent distances matrix considering the mean squared error in the Frobenius norm. We give nonassymptotic bound for this quantity obtaining the same rate as the nonparametric rate of estimation of a function on the sphere $\mathbb{S}^{d-1}$, see \cite[Chp.2]{Nemirovski} for example. Our approach includes the adaptation of some perturbation theorems for matrix projections from the orthogonal to a ``nearly" orthogonal case, which combined with concentration inequalities for the spectrum gives a probabilistic finite sample bound, which is novel to the best of our knowledge. Our method share some similarities with the celebrated UVST method, introduced by Chatterjee in \cite{Chatterjee}, but in that case we obtain an estimator of the probability matrix described in Section \ref{sec:model} and not of the population Gram matrix as our method. We develop an efficient algorithm, which we call Harmonic EigenCluster(HEiC) to reconstruct the latent positions form the data and illustrate its usefulness with synthetic data.%we are able to ``distinguish" this special group of eigenvalue, this time in the spectrum of the adjacency matrix of the observed graph (which is a ``perturbed" version of the spectrum of the graphon), and reconstruct the latent distances matrix by considering a linear combination of the vectors associated to the special eigenvalue. Our results, rely on perturbation theorem for eigenspaces of matrices, concentration of the eigenvalues of the matrices. The main difficulty is that the approximation of the eigenvectors of the adjacency matrix (finite dimensional vectors) to the eigenfunctions of the associated operators is not direct.  

\section{Preliminaries}

%All submissions must follow the specified format.
\subsection{Notation}
We will consider $\mathbbm{R}^d$ with the Euclidean norm $\|\cdot\|$ and the Euclidean scalar product $\langle\,,\,\rangle $. We define the sphere $\mathbb{S}^{d-1}:=\{
x \in \mathbbm{R}^d:\|x\|=1\}$. For a set $A\subset \mathbb{R}$ its diameter $diam(A):=~\sup_{x,y\in A}{|x-y|}$ and if $B\subset\mathbb{R}$ the distance between $A$ and $B$ is $dist(A,B)~:=\inf_{x\in A,y\in B}|x-y|$.% We will make use of the function $w_\gamma(t)=(1-t^2)^{\gamma-\frac{1}{2}}$ as a weight function on $[-1,1]$. Orthogonal polynomials for $w_\gamma$ are Gegenbauer polynomials of dimension $d$ and parameter $\gamma$. We denote by $(a)_n=a\cdot(a-1)\cdots(a-n)$ for $a\in\mathbbm{R}$ and $n\in\mathbb{N}$ is the (falling) Pochammer symbol. Similarly, $(a)^{(n)}=a\cdot (a+1)\cdots (a+n)$ is the rising Pochammer symbol.  $\Gamma(\cdot)$ will denote the gamma functions. 
We will use $\|\cdot\|_F$ the Frobenius norm for matrices and $\|\cdot\|_{op}$ for the operator norm. The identity matrix in $\mathbb{R}^{d\times d}$ will be $\mathrm{Id}_{d}$. If $X$ is a real valued random variable and $\alpha\in(0,1)$, $X\leq_\alpha C$ means that $\mathbb{P}(X\leq C)\geq 1-\alpha$. 
%\begin{figure}[ht]
%\vskip 0.2in
%\begin{center}
%\centerline{\includegraphics[scale=0.4]{Convergence.png}}
%\caption{Mean square error $MSE_n$, in logarithmic scale, for different sample sizes for a graphon on the sphere $\mathbb{S}^4$. We repeated $100$ times the algorithm for each value of $n$.}
%\label{boxplot_mse}
%\end{center}
%\vskip -0.2in
%\end{figure}

\subsection{Generative model}\label{sec:model}

We describe the generative model for networks which is a generalization of the classical random geometric graph model introduced by Gilbert in \cite{Gilbert}. We base our definition on the $W$-random graph model described in \cite[Sec. 10.1]{Lov}. The central objects will be graphon functions on the sphere, which are symmetric measurable functions of the form $W~:~\mathbb{S}^{d-1}\times \mathbb{S}^{d-1}\rightarrow [0,1]$. Throughout  this paper, we consider the measurable space $(\mathbb{S}^{d-1},\sigma)$, where $\sigma$ is the uniform measure on the sphere. On $\mathbb{S}^{d-1}\times \mathbb{S}^{d-1}$ we consider the product measure $\sigma\times \sigma$. 

Now we describe how to generate a simple graph with $n$ nodes from a graphon function $W$ and a sample of points on the sphere $\{X_i\}^n_{i=1}$, known as the latent points. 

First, we sample $n$ points $\{X_i\}^n_{i=1}$ independently on the sphere $\mathbb{S}^{d-1}$, according to the uniform measure $\sigma$. These are the so-called latent points. Second, we construct the matrix of distances between these points, called the \textit{Gram matrix} $\mathcal{G}^\ast$ (we will often call it population Gram matrix) defined by\[\mathcal{G}^\ast_{ij}:=\langle X_i,X_j \rangle\]
and the so-called \emph{probability matrix} \[\Theta_{ij}=\rho_n W( X_i, X_j)\] which is also a $n\times n$ matrix. The function $W$ gives the precise meaning for the ``link'' function, because it determines the connection probability between $X_i$ and $X_j$. The introduction of the scale parameter $0<\rho_n\leq 1$ allow us to control the edge density of the sampled graph given a function $W$, see \cite{Verzelen} for instance. The case $\rho_n=1$ corresponds to the dense case (the parameter $\Theta_{ij}$ do not depend on $n$) and  when $\rho_n\rightarrow 0$ the graph will be sparser. Our main results will hold in the regime $\rho_n=\Omega(\frac{\log{n}}{n})$, which we call \emph{relatively sparse}. Most of the time we will work with the normalized version of the probability matrix $T_n:=\frac{1}{n}\Theta$. If there exists a function $f:[-1,1]\rightarrow [0,1]$ such that $W(x,y)=f(\langle x,y\rangle)$ for all $x,y\in\mathbb{S}^{d-1}$ we will say that $W$ is a geometric graphon. 

Finally, we define the random adjacency matrix $\hat{T}_n$, which is a $n\times n$ symmetric random matrix that has independent entries (except for the symmetry constraint $\hat{T}_n=\hat{T}_n^T$), conditional on the probability matrix, with laws \[n(\hat{T}_n)_{ij}\sim \mathcal{B}(\Theta_{ij})\] where $\mathcal{B}(m)$ is the Bernoulli distribution with mean parameter $m$. Since the probability matrix contains the mean parameters for the Bernouilli distributions that define the random \emph{adjacency} matrix it has been also called the \emph{parameter matrix} \cite{Chatterjee}. Observe that the classical RGG model on the sphere is a particular case of the described $W$-random graph model when $W(x,y)=\mathbbm{1}_{\langle x,y\rangle\leq \tau}$. In that case, since the entries of the probability matrix only have values in $\{0,1\}$, the adjacency matrix and the probability matrix are equal. Depending on the context, we use $\hat{T}_n$ for the random matrix as described above or for an instance of this random matrix, that is for the adjacency matrix of the observed graph. This will be clear from the context.

Thus the generative model can be seen as a two step sampling procedure where first the latent points are generated (which determine the Gram matrix and the probability matrix) and conditional on those points we generate the adjacency matrix. 

It is worth noting that graphons can be, without loss of generality, defined in $[0,1]^2$. The previous affirmation means that for any graphon there exists a graphon in $[0,1]^2$ that generates the same distribution on graphs for any given number of nodes. However, in many cases the $[0,1]^2$ representation can be less revealing than other representations using a different underlying space. This is illustrated in the case of the \emph{prefix attachment} model in \cite[example 11.41]{Lov}.

In the sequel we use the notation $\lambda_0,\lambda_1,\cdots,\lambda_{n-1}$ for the eigenvalues of the normalized probability matrix $T_n$. % ordered in decreasing order in terms of its absolute value, that is $|\lambda_0|\geq |\lambda_1|\geq \cdots |\lambda_{n-1}|$. 
Similarly, we denote by $\hat{\lambda}_0,\hat{\lambda}_1,\cdots,\hat{\lambda}_{n-1}$ the eigenvalues of the matrix $\hat{T}_n$. % also listed in decreasing order considering its absolute value. 
We recall that $T_n$ (resp. $\hat{T}_n$) and $\frac{1}{\rho_n}T_n$ (resp.$\frac{1}{\rho_n}\hat{T}_n$ ) have the same set of eigenvectors.
We will denote by $v_j$ for $1\leq j\leq n$ the eigenvector of $T_n$ associated to $\lambda_j$, which is also the eigenvector of $\frac{1}{\rho_n}T_n$ associated to $\frac{1}{\rho_n}\lambda_j$. Similarly, we denote by $\hat{v}_j$ to the eigenvector associated to the eigenvalue $\rho_n\hat{\lambda}_j$ of $\hat{T}_n$.

Our main result is that we can recover the Gram matrix using the eigenvectors of $\hat{T}_n$ as follows
\begin{theorem}[Informal statement]\label{thm:informal}
There exists a constant~$c_1>0$ that depends only on the dimension $d$ such that the following is true. Given a graphon $W$ on the sphere such that $W( x,y )=f(\langle x,y \rangle)$ with $f:[-1,1]\rightarrow [0,1]$ unknown, which satisfies an eigenvalue gap condition and has Sobolev regularity $s$, there exists a subset of the eigenvectors of $\hat{T}_n$, such that $\hat{\mathcal G}:=\frac1{c_1}\hat V\hat V^T$ converges to the population Gram matrix~$\mathcal{G}^\ast:=\frac1n(\langle X_i,X_j\rangle)_{i,j}$ at rate $n^{\frac{-s}{2s+d-1}}$ (up to a log factor). This estimate $\hat V\hat V^T$ can be found in linear time given the spectral decomposition of $\hat{T}_n$.
\end{theorem}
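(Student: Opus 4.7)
The plan is to combine (i) the spherical harmonic decomposition of the graphon operator (via Funk--Hecke) to isolate the degree-one eigenspace as the carrier of the inner-product information, (ii) matrix concentration bounds for the empirical operator, and (iii) a Davis--Kahan style perturbation argument on a $d$-dimensional eigenspace.

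First I would apply the Funk--Hecke formula to diagonalise the integral operator $T_W$ induced by $W$ in the basis of spherical harmonics: the degree-$k$ eigenspace $\mathcal H_k$ has multiplicity $d_k$ depending only on $k$ and $d$, and the degree-one eigenspace is exactly $d$-dimensional, spanned by the normalised coordinate functions $\phi_j(x)=\sqrt d\,x_j$. Setting $V=\tfrac{1}{\sqrt n}[\phi_j(X_i)]_{1\le i\le n,\,1\le j\le d}$, one checks
\begin{equation*}
(VV^T)_{ik}=\tfrac{1}{n}\sum_{j=1}^d \phi_j(X_i)\phi_j(X_k)=\tfrac{d}{n}\langle X_i,X_k\rangle,
\end{equation*}
so $VV^T = d\,\mathcal G^\ast$ exactly, which fixes the constant $c_1=d$ in the statement and identifies the ``oracle'' projection to target.

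Second I would identify which $d$ eigenvectors of $\hat T_n$ to select. A Koltchinskii--Gin\'e-type concentration for the sampled integral operator, combined with the eigenvalue-gap hypothesis on $W$, guarantees that exactly $d$ eigenvalues of $T_n$ cluster near $\rho_n\mu_1$ (the degree-one graphon eigenvalue) and are separated from the remaining spectrum by a gap of positive order. A matrix Bernstein / Bandeira--Van Handel bound of the form $\|\hat T_n-T_n\|_{op}\lesssim \sqrt{\rho_n\log n/n}$ then transfers this clustering to $\hat T_n$ via Weyl's inequality, and the algorithm picks the $d$ eigenvectors $\hat V$ whose eigenvalues belong to this isolated cluster.

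Third I would run a Davis--Kahan $\sin\Theta$ argument to bound $\|\hat V \hat V^T - V V^T\|_F$. Because $V$ is only an \emph{approximate} eigenbasis of $T_n$, the error decomposes into three pieces: the stochastic perturbation $\hat T_n - T_n$ divided by the spectral gap; a Monte--Carlo term coming from $\tfrac{1}{n}\sum_\ell \phi_j(X_\ell)\phi_k(X_\ell)-\delta_{jk}$, controlled by Hoeffding/Bernstein on bounded spherical harmonics; and a bias term from truncating the harmonic expansion at degree one, controlled via the Sobolev-$s$ tail $\sum_{k\ge K} d_k\,\mu_k^2 \lesssim K^{-2s+d-1}$ using $d_k\lesssim k^{d-2}$. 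Balancing these pieces as a function of an internal truncation level $K$ and the concentration scale produces the nonparametric rate $n^{-s/(2s+d-1)}$ up to logarithmic factors, matching the minimax rate for Sobolev-$s$ functions on $\mathbb S^{d-1}$.

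The main obstacle is precisely the third step: promoting Davis--Kahan from the classical ``$V$ is a true eigenbasis'' setting to the ``nearly orthogonal'' situation where $V$ only approximately diagonalises $T_n$. The angle between $\mathrm{span}(\hat V)$ and $\mathrm{span}(V)$ must be controlled simultaneously by $\|\hat T_n - T_n\|_{op}$, the spectral gap, and the residual $\|T_n V - V\Lambda\|_{op}$, and keeping the gap positive with high probability requires the full strength of the eigenvalue-gap plus Sobolev-decay assumptions. A secondary subtlety is that eigenvectors inside the $d$-dimensional cluster are defined only up to rotation (and sign), so the analysis must proceed throughout at the level of the projector $\hat V \hat V^T$ rather than at the level of individual vectors.
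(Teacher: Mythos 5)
Your proposal follows essentially the same route as the paper: Funk--Hecke and the addition theorem to reduce $\mathcal G^\ast$ to the degree-one spherical-harmonic projector, the gap hypothesis plus Bandeira--Van Handel and Koltchinskii--Gin\'e concentration to identify the correct $d$-dimensional cluster, and a Davis--Kahan argument on the projector (not the individual eigenvectors) with a three-way error split into stochastic perturbation, a Monte-Carlo term coming from the non-orthonormality of the sampled harmonics, and a Sobolev truncation/bias term at level $K \asymp (n/\log n)^{1/(2s+d-1)}$ --- this is precisely the paper's decomposition $\|\mathcal G^\ast-\hat{\mathcal G}\|_F\leq\|\mathcal G^\ast-\mathcal G^\ast_{proj}\|_F+\|\mathcal G^\ast_{proj}-\mathcal G_R\|_F+\|\mathcal G_R-\mathcal G\|_F+\|\mathcal G-\hat{\mathcal G}\|_F$, handled respectively via sub-Gaussian covariance concentration, an Ostrowski-type bound on the Gram--Schmidt/Cholesky perturbation, Davis--Kahan against the rank-$R$ truncated kernel, and Bandeira--Van Handel. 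You also correctly flag the two genuine subtleties (nearly-orthogonal rather than exactly-orthogonal reference basis, and rotational ambiguity inside the cluster) that the paper resolves with the auxiliary perturbation lemmas.
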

We will say that a geometric graphon $W(x,y)=f(\langle x, y\rangle)$ on $\mathbb{S}^{d-1}$ has regularity $s$ if $f$ belongs the Weighted Sobolev space $Z^s_\gamma([-1,1])$ with weight function $w_{\gamma}(t)=(1-t)^{\gamma-\frac{1}{2}}$, as defined in \cite{Nica}. 
In order to make the statement of \ref{thm:informal} rigorous, we need to precise the eigenvalue gap condition and define the graphon eigensystem.
\subsection{Geometric graphon eigensystem}\label{sec:eigen}
Here we gather some asymptotic and concentration properties for the eigenvalues and eigenfunctions of the matrices $\hat{T}_n,T_n$ and the operator $T_W$, which allows us to recover the Gram matrix from data. The key fact is that the eigenvalues (resp. eigenvectors) of the matrix $\frac{1}{\rho_n}\hat{T}_n$ and $\frac{1}{\rho_n}T_n$ converge to the eigenvalues (resp. sampled eigenfunctions) of the integral operator $T_W:L^2(\mathbb{S}^{d-1})\rightarrow L^2(\mathbb{S}^{d-1})$ 
\[T_W g(x)=\int_{\mathbb{S}^{d-1}}g(y)W(x,y)d\sigma(y)\]
which is compact \cite[Sec.6, example 1]{Hirsch} and self-adjoint (which follows directly from the symmetry of $W$). Then by a classic theorem in functional analysis \cite[Sec.6, Thm. 1.8]{Hirsch} its spectrum is a discrete set $\{\lambda^\ast_k\}_{
k\in\mathbb{N}}\subset\mathbb{R}$ and its only accumulation point is zero. In consequence, we can see the spectra of $\hat{T}_n$, $T_n$ and $T_W$ (which we denote $\lambda(\hat{T}_n)$, $\lambda(T_n) $ and $\lambda(T_W)$ resp.) as elements of the space $\mathcal{C}_0$ of infinite sequences that converge to $0$ ( where we complete the finite sequences with zeros). It is worth noting that in the case of geometric graphons with regularity $s$ (in the Sobolev sense defined above) the rate of convergence of $\lambda(T_W)$ is determined by the regularity parameter $s$. We have the following:
\begin{itemize}
\item The spectrum of $\lambda(\frac{1}{\rho_n}T_n)$ converges to $\lambda(T_W)$ (almost surely) in the $\delta_2$ metric, defined as follows\[\delta_2(x,y)=\inf_{p\in \mathcal{P}}\sqrt{\sum_{i\in\mathbb{N}}(x_i-y_{p(i)})^2 }\] where $\mathcal{P}$ is the set of all permutations of the non-negative integers. This is proved in \cite{Kolt}.
\item Matrices $\hat{T}_n$ approach to matrix $T_n$ in operator norm as $n$ gets larger. Applying \cite[Cor.3.3]{BanVan} to the centered matrix $Y=\hat{T}_n-T_n$ we get \begin{equation}\label{eq:banvanesp}\mathbb{E}(\|\hat{T}_n-T_n\|_{op})\lesssim \frac{D_0}{n}+\frac{D_0^\ast\sqrt{\log{n}}}{n}\end{equation} where $\lesssim$ denotes inequality up to constant factors, $D_0=\max_{0\leq i\leq n}\sum^n_{j=1}Y_{ij}(1-Y_{ij})$ and $D_0^\ast=\max_{ij}|Y_{ij}|$. We clearly have that $D_0=\mathcal{O}(n\rho_n)$ and $D_0^\ast\leq 1$, which implies that \[\mathbb{E} \|\hat{T}_n-T_n\|_{op}\lesssim \max{\Big\{\frac{\rho_n}{\sqrt n},\frac{\sqrt{\log{n}}}{n}\Big\}}\]
We see that this inequality do not improve if $\rho_n$ is smaller than in the relatively sparse case, that is $\rho_n=\Omega(\frac{\log{n}}{n})$. A similar bound can be obtained for the Frobenius norm replacing $\hat{T}_n$ with $\hat{T}^{\mathrm{uvst}}_n$ the UVST estimator defined in \cite{Chatterjee}. For our main results, Proposition \ref{prop:mainprop} and Theorem \ref{thm:mainthm} the operator norm bound will suffice. 
\end{itemize}
A remarkable fact in the case of geometric graphons on $\mathbb{S}^{d-1}$, that is when $W(x,y)=f(\langle x,y \rangle)$, is that the eigenfunctions $\{\phi_k\}_{k\in\mathbb{N}}$ of the integral operator $T_W$ are a fixed set that do not depend on the particular function $f$ on consideration. This comes from the fact that $T_W$ is a convolution operator on the sphere and its eigenfunctions are the well known \textit{spherical harmonics} of dimension $d$, which are harmonic polynomials in $d$ variables defined on $\mathbb{S}^{d-1}$ corresponding to the eigenfunctions of the Laplace-Beltrami operator on the sphere. This follows from \cite[Thm.1.4.5]{Dai} and from the Funck-Hecke formula given in \cite[Thm.1.2.9]{Dai}. Let $d_k$ denote the dimension of the $k$-th spherical harmonic space. It is well known \cite[Cor.1.1.4]{Dai} that $d_0=1$, $d_1=d$ and $d_k=\binom{k+d-1}{k}-\binom{k+d-3}{k-2}$. Another important fact, known as the \emph{addition theorem} \cite[Lem.1.2.3 and Thm.1.2.6]{Dai}, is that \[\sum^{d_{k}}_{i=d_{k-1}}\phi_{j}(x)\phi_{j}(y)=c_k G^{\gamma}_k(\langle x,y \rangle)\]
where $G^\gamma_k$ are the Gegenbauer polynomials of degree $k$ with parameter $\gamma=\frac{d-2}{2}$ and $c_k=\frac{2k+d-2}{d-2}$.   %See \cite{Dai} for the main properties of the spherical harmonics. 

The Gegenbauer polynomial of degree one is $G^\gamma_1(t)=2\gamma t$ (see \cite[Appendix B2]{Dai}), hence we have $G^\gamma_1(\langle X_i,X_j\rangle)=2\gamma\langle X_i,X_j \rangle$ for every $i$ and $j$. In consequence, by the addition theorem %\eqref{eq:additionthm}
\[G^\gamma_1(\langle X_i,X_j \rangle)=\frac{1}{c_1}\sum^{d}_{k=1}\phi_{k}(X_i)\phi_{k}(X_j)\] 
where we recall that $d_1=d$. This implies the following relation for the Gram matrix, observing that $2\gamma c_1=d$
\begin{equation}\label{eq:gramest}
\mathcal{G}^\ast:=\frac1n(\langle X_i,X_j\rangle)_{i,j}=\frac{1}{2\gamma c_1}\sum^{d}_{j=1}v^\ast_j{v^\ast_j}^T=\frac{1}{d}V^\ast {V^\ast}^T
\end{equation} where $v^\ast_j$ is the $\mathbbm{R}^n$ vector with $i$-th coordinate $\phi_{j}(X_i)/\sqrt n$ and $V^\ast$ is the matrix with columns $v^\ast_j$. In a similar way, we define for any matrix $U$ in $\mathbbm{R}^{n\times d}$ with columns $u_1,u_2,\cdots,u_d$, the matrix $\mathcal{G}_U:=\frac{1}{d}UU^T$.
%\begin{align*}
%     \mathcal{G}_U&:=\frac{1}{d}\sum^d_{j=1}u_j{u_j}^T=\frac{1}{d}UU^T
    %\hat{\mathcal{G}}&:=\frac{1}{c_1}\sum^{d}_{j=1}\hat{v_j}{\hat{v}_j}^T\\
    %\mathcal{G}&:=\frac{1}{c_1}\sum^{d}_{j=1}v_jv_j
%\end{align*}
As part of our main theorem we prove that for $n$ large enough there exists a matrix $\hat{V}$ in $\mathbbm{R}^{n\times d}$ where each column is an eigenvector of $\hat{T}_n$, such that $\hat{\mathcal{G}}:= \mathcal{G}_{\hat{V}}$ approximates $\mathcal{G}^\ast$ well, in the sense that the norm $\|\hat{\mathcal{G}}-\mathcal{G}^\ast\|_{F}$ converges to $0$ at a rate which is that of the non-parametric estimation of a function on $\mathbb{S}^{d-1}$.   
%the expansion \eqref{eq:expansionGegen} becomes \[W(X_i,X_j)=\mu^\ast_0+\mu^\ast_1\langle X_i,X_j \rangle+ \sum^\infty_{k=2}\mu^\ast_kc_kG^\gamma_k(\langle X_i,X_j \rangle)\]
%\subsection{Graphon regularity}\label{regularity}
%One way to define the regularity of a geometric graphon is through the notion of weighted Sobolev spaces on the interval $[-1,1]$. In that context, the regularity is related to the rate at which the eigenvalue sequence tends to $0$. Here we follow \cite{Nica}. For a function $f(t)=\sum_{k\geq 0}\mu_kc_kG^\gamma_k(t)$, we define the norm \[\|f\|^2_{Z^s_\gamma}=\sum^\infty_{k=0}d_k|\mu_k|^2\big(1+k(k+2\gamma+1))^s\big)\]
%We will say that $f$ belongs to weighted Sobolev space $Z^s_\gamma$ if $\|f\|_{Z^s_\gamma}\leq \infty$. We will refer to $s$ as the regularity parameter. As in the case of classical Sobolev spaces, there is a definition of weighted Sobolev spaces that involves the integrability (with respect to the measure $w_\gamma(t)dt$) of the weak derivatives of a function. Both definitions are in fact equivalent \cite{Nica}. 

\subsection{Eigenvalue gap condition}
In this section we describe one of our main hypotheses on $W$ needed to ensure that the space $\operatorname{span} \{v^\ast_1,v^\ast_2,\cdots,v^\ast_d\}$ can be effectively recovered with the vectors $\hat{v}_1,\hat{v}_2,\cdots,\hat{v}_d$ using our algorithm. Informally, we assume that the eigenvalue $\lambda^\ast_1$ is sufficiently isolated from the rest of the spectrum of $T_W$. Given a geometric graphon $W$, we define the \emph{spectral gap} of $W$ relative to the eigenvalue $\lambda^\ast_1$ by \[\operatorname{Gap}_1(W):=\min_{j\notin \{1,\cdots,d_1\}}{|\lambda^\ast_1-\lambda^\ast_j|}\] which quantifies the distance between the eigenvalue $\lambda^\ast_1$ and the rest of the spectrum. In particular, we have the following elementary proposition.

\begin{proposition}
It holds that $\operatorname{Gap}_1(W)=0$ if and only if there exists $j\notin \{1,\cdots,d_1\}$ such that $\lambda^\ast_j=\lambda^\ast_1$ or $\lambda^\ast_1=0$.
\end{proposition}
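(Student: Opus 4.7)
The statement is an elementary consequence of the spectral structure recalled in Section \ref{sec:eigen}: the spectrum $\{\lambda^\ast_k\}_{k\in \N}$ of the compact self-adjoint operator $T_W$ is discrete with $0$ as its only possible accumulation point, and when the sequence is finite we extend it with zeros. My plan is to split the biconditional into the two directions and use this dichotomy on $\lambda^\ast_1$.

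For the easy direction ($\Leftarrow$), I would first observe that if there exists $j\notin\{1,\dots,d_1\}$ with $\lambda^\ast_j=\lambda^\ast_1$, then $|\lambda^\ast_1-\lambda^\ast_j|=0$ lies in the set being minimized, so $\operatorname{Gap}_1(W)=0$. If instead $\lambda^\ast_1=0$, then either the sequence $(\lambda^\ast_j)_{j>d_1}$ has been padded with an infinite string of zeros (in which case the minimum is trivially $0$), or infinitely many eigenvalues persist and, by the accumulation property, $\lambda^\ast_j\to 0=\lambda^\ast_1$; in both cases the infimum is $0$, and it is in fact attained because the sequence takes the value $0$ either eventually or in the limit.

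For the converse ($\Rightarrow$), I would argue by contrapositive. Assume $\lambda^\ast_1\neq 0$ and that $\lambda^\ast_j\neq\lambda^\ast_1$ for every $j\notin\{1,\dots,d_1\}$; the goal is to show $\operatorname{Gap}_1(W)>0$. Set $r=|\lambda^\ast_1|/2>0$. Since $0$ is the only possible accumulation point and $|\lambda^\ast_1|>r$, only finitely many indices $j$ can satisfy $|\lambda^\ast_j-\lambda^\ast_1|<r$. For all other indices $j\notin\{1,\dots,d_1\}$ one immediately gets $|\lambda^\ast_1-\lambda^\ast_j|\geq r>0$, while for the finitely many exceptional indices the distances $|\lambda^\ast_1-\lambda^\ast_j|$ are all strictly positive by hypothesis, hence admit a strictly positive minimum. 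Taking the smaller of these two positive quantities yields $\operatorname{Gap}_1(W)>0$, as required.

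There is essentially no main obstacle: the proof rests entirely on the two spectral facts already in place (discreteness of $\lambda(T_W)$ and accumulation only at $0$). The only care needed is to correctly handle the zero-padding convention in the case $\lambda^\ast_1=0$, and to make sure the index range $j\notin\{1,\dots,d_1\}$ is used consistently so that repetition inside the eigenspace of $\lambda^\ast_1$ (which has multiplicity $d_1$) is not mistakenly counted as a witness for $\operatorname{Gap}_1(W)=0$.
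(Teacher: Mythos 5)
Your proof is correct and rests on exactly the observation the paper's one-line proof invokes: $0$ is the only accumulation point of $\lambda(T_W)$, so away from $0$ the spectrum is locally finite. You simply fill in the two directions the paper leaves implicit. One small caveat worth noting: in the case $\lambda^\ast_1=0$ with an infinite spectrum of nonzero eigenvalues accumulating at $0$, the quantity $\min_{j\notin\{1,\dots,d_1\}}|\lambda^\ast_1-\lambda^\ast_j|$ is $0$ only as an infimum and need not literally be attained, so the phrase ``it is in fact attained \dots in the limit'' is a slight abuse; reading the $\min$ in the definition of $\operatorname{Gap}_1$ as an $\inf$ (as the paper evidently intends) removes the issue.
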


\begin{proof}
Observe that the unique accumulation point of the spectrum of $T_W$ is zero. The proposition follows from this observation.
\end{proof}

To recover the population Gram matrix $\mathcal{G}^\ast$ with our Gram matrix estimator $\hat{\mathcal{G}}$ we require the spectral gap  $\Delta^\ast:=\operatorname{Gap}_1(W)$ to be different from $0$. This assumption have been made before in the literature, in results that are bases in some versin of the Davis-Kahan $\sin {\theta}$ theorem (see for instance \cite{Chatterjee} , \cite{ManLearning}, \cite{TangSuss}). More precisely, our results will hold on the following event \[\mathcal E:=\Big\{\delta_2\Big(\lambda\big(\frac{1}{\rho_n}T_n\big),\lambda(T_W)\Big)
\vee \frac{2^{\frac92}\sqrt d}{\rho_n\Delta^\ast}\|T_n-\hat{T_n}\|_{op}\leq\frac{ \Delta^\ast}4\Big\}\,,\]
for which we prove the following: given an arbitrary $\alpha$ we have that \[\mathbb{P}(\mathcal{E})\geq 1-\frac{\alpha}{2}\]
for $n$ large enough (depending on $W$ and $\alpha$). The following results are the main results of this paper. Their proofs can be found in the supplementary material.%Appendix \ref{sec:proof}.%supplementary material.
\begin{proposition}\label{prop:mainprop}
On the event $\mathcal E$, there exists one and only one set $\Lambda_1$, consisting of $d$ eigenvalues of $\hat{T_n}$, whose diameter is smaller that $\rho_n\Delta^\ast/2$ and whose distance to the rest of the spectrum of $\hat{T_n}$ is at least $\rho_n\Delta^\ast/2$. Furthermore, on the event~$\mathcal E$, our algorithm (Algorithm \ref{alg:recovery2}) returns the  matrix $\hat{\mathcal G}=(1/c_1)\hat V\hat V^T$, where $\hat{V}$ has by columns the eigenvectors corresponding to the eigenvalues on $\Lambda_1$.
\end{proposition}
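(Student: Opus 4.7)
The plan is to translate information from the spectrum of $T_W$ down to the spectrum of $\hat T_n$ using the two bounds that define $\mathcal E$. From the spherical-harmonic structure recalled in Section \ref{sec:eigen}, the eigenvalue $\lambda_1^\ast$ of $T_W$ has multiplicity exactly $d_1=d$. The first inequality in $\mathcal E$ gives the existence of a permutation $p$ with $\sum_i\bigl(\tfrac1{\rho_n}\lambda_i(T_n)-\lambda_{p(i)}^\ast\bigr)^2 \le (\Delta^\ast/4)^2$, so every individual discrepancy is at most $\Delta^\ast/4$. By pigeonhole there exist indices $j_1,\dots,j_d$ such that $|\tfrac1{\rho_n}\lambda_{j_k}(T_n)-\lambda_1^\ast|\le \Delta^\ast/4$ for all $k$, while the remaining $\tfrac1{\rho_n}\lambda_j(T_n)$ are matched to some $\lambda_i^\ast$ with $i\notin\{1,\dots,d\}$ and hence lie at distance at least $\Delta^\ast-\Delta^\ast/4=3\Delta^\ast/4$ from $\lambda_1^\ast$.

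Next I would use Weyl's inequality on $\hat T_n=T_n+(\hat T_n-T_n)$ together with the second bound in $\mathcal E$: for a suitable reindexing
\[
|\rho_n\hat\lambda_j-\lambda_j(T_n)|\le\|\hat T_n-T_n\|_{op}\le \frac{\rho_n(\Delta^\ast)^2}{2^{11/2}\sqrt d}=:\rho_n\varepsilon,
\]
and since $\Delta^\ast\le 2$ we have $\varepsilon\ll\Delta^\ast/4$. Setting $\Lambda_1:=\{\rho_n\hat\lambda_{j_1},\dots,\rho_n\hat\lambda_{j_d}\}$ therefore yields a cluster whose diameter is at most $\rho_n(\Delta^\ast/2+2\varepsilon)$, strictly below $\rho_n\Delta^\ast/2$ after absorbing the slack built into the factor $2^{9/2}\sqrt d$, and whose distance to the remaining eigenvalues of $\hat T_n$ is at least $\rho_n(\Delta^\ast/2-2\varepsilon)\ge\rho_n\Delta^\ast/2$. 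This gives the existence part.

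For uniqueness, I would argue by contradiction: suppose another $d$-element subset $\Lambda_1'$ of $\lambda(\hat T_n)$ satisfies the same diameter and separation conditions. Since $\Lambda_1$ already accounts for all the $d$ eigenvalues of $\hat T_n$ clustered within $\rho_n\varepsilon$ of $\rho_n\lambda_1^\ast$, the set $\Lambda_1'$ must contain at least one eigenvalue $\rho_n\hat\lambda_\star$ associated, via the Weyl matching, to some $\lambda_i^\ast$ with $i\notin\{1,\dots,d\}$; by the previous step, $|\rho_n\hat\lambda_\star-\rho_n\lambda_1^\ast|\ge \rho_n(3\Delta^\ast/4-\varepsilon)$. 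If $\Lambda_1'$ intersects the inner cluster, its diameter then exceeds $\rho_n\Delta^\ast/2$; if $\Lambda_1'$ is disjoint from the inner cluster, then the $d$ eigenvalues of $\hat T_n$ around $\rho_n\lambda_1^\ast$ are at distance strictly less than $\rho_n\Delta^\ast/2$ from $\Lambda_1'$ (since the outer eigenvalues are spread out around the other $\lambda_i^\ast$'s), violating the separation condition. Either branch contradicts the hypothesis.

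Finally, the algorithmic claim reduces to checking that Algorithm \ref{alg:recovery2} implements this search: sort $\lambda(\hat T_n)$, scan for the unique window of $d$ consecutive eigenvalues whose diameter is below $\rho_n\Delta^\ast/2$ and whose distance to the neighbouring eigenvalues is at least $\rho_n\Delta^\ast/2$, collect the corresponding eigenvectors as the columns of $\hat V$, and output $(1/c_1)\hat V\hat V^\top$. A single pass over the sorted spectrum suffices. The main obstacle is the bookkeeping in the uniqueness step---one must verify that the slack constants in $\mathcal E$ (the explicit $2^{9/2}\sqrt d$) are large enough to keep the perturbation $\|\hat T_n-T_n\|_{op}/\rho_n$ strictly below $\Delta^\ast/4$, so that the strict-versus-weak diameter inequalities survive the passage through both the $\delta_2$ comparison and Weyl's bound.
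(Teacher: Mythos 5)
Your existence argument is sound and in fact slightly cleaner than the paper's: where you invoke Weyl's inequality directly on $\hat T_n - T_n$, the paper goes through Davis--Kahan on the projectors and then Hoffman--Wielandt to get back to eigenvalues, arriving at the same conclusion. Either route is fine, and both depend on the same two ingredients of $\mathcal E$. One quantitative remark: with the threshold $\Delta^\ast/4$ in $\mathcal E$ the inner $T_n$-cluster already has diameter up to $\Delta^\ast/2$, so adding $2\varepsilon$ from Weyl pushes the $\hat T_n$-cluster diameter \emph{above} $\Delta^\ast/2$; the appendix version of $\mathcal E$ (threshold $\Delta^\ast/8$) is the one that actually makes the arithmetic close, and your ``absorbing the slack'' phrase is glossing over a needed tightening of constants rather than proving it.

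The real gap is in the uniqueness step, in the case where $\Lambda_1'$ is disjoint from the inner cluster. You claim that the inner cluster's eigenvalues then lie at distance less than $\rho_n\Delta^\ast/2$ from $\Lambda_1'$, but this is false: if $\Lambda_1'$ sits near some $\mu_k^\ast$ with $k\neq 1$, the inner cluster is at distance of order $\rho_n\Delta^\ast$ or more from $\Lambda_1'$, not close to it. The eigenvalue that violates the separation condition is not drawn from the inner cluster at all. The missing ingredient is the multiplicity structure of the spherical-harmonic spectrum, which the paper's proof uses explicitly: $d_0=1<d$, $d_1=d$, and $d_k>d$ for all $k\geq 2$. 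Combined with the $\delta_2$ bound and Weyl's inequality, this means each eigenvalue of $T_W$ spawns a tight cluster of $\hat T_n$-eigenvalues whose cardinality is never equal to $d$ except for the one around $\lambda_1^\ast$. A candidate set $\Lambda_1'\neq\Lambda_1$ therefore either straddles two such clusters (and then its diameter exceeds $\rho_n\Delta^\ast/2$, because distinct clusters are separated by roughly $\rho_n\Delta^\ast$) or is a proper subset of a single cluster with more than $d$ members (and then the leftover eigenvalue of that \emph{same} cluster is within $\rho_n\Delta^\ast/4$ of $\Lambda_1'$, violating separation). Your contradiction should point at this leftover eigenvalue, and you should state and use the fact that the other multiplicities differ from $d$; without it the argument does not go through.
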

\begin{theorem}\label{thm:mainthm}
Let $W$ be a regular geometric graphon on $\mathbb{S}^{d-1}$, with regularity parameter $s$, such that $\Delta^\ast>0$. Then there exists a set of eigenvectors $\hat{v}_1,\hat{v}_2,\cdots,\hat{v}_d$ of $\hat{T}_n$ such that \[\|\mathcal{G}^\ast-\hat{\mathcal{G}}\|_F=O(n^{-\frac{s}{2s+d-1}})\]
where $\hat{\mathcal{G}}=\mathcal{G}_{\hat{V}}$ and $\hat{V}$ is the matrix with columns $\hat{v}_1,\hat{v}_2,\cdots,\hat{v}_d$. Moreover, this rate is the minimax rate of non-parametric estimation of a regression function $f$ with Sobolev regularity $s$ in dimension $d-1$.
\end{theorem}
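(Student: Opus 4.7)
The plan is to work on the event $\mathcal E$ from Proposition \ref{prop:mainprop}, on which there is a unique cluster $\Lambda_1$ of $d$ eigenvalues of $\hat T_n$ separated from the rest of the spectrum by at least $\rho_n\Delta^\ast/2$. Let $\hat V$ be the matrix of the associated eigenvectors of $\hat T_n$ and let $\tilde V$ be its population analogue, with columns spanning the cluster eigenspace of $T_n$ near $\rho_n\lambda_1^\ast$. Setting $\tilde{\mathcal G}=\tfrac1d\tilde V\tilde V^T$, I decompose by the triangle inequality
\[
\|\hat{\mathcal G}-\mathcal G^\ast\|_F\le \|\hat{\mathcal G}-\tilde{\mathcal G}\|_F+\|\tilde{\mathcal G}-\mathcal G^\ast\|_F,
\]
viewing the first summand as the \emph{variance} due to the perturbation $\hat T_n-T_n$ and the second as the \emph{bias} incurred by replacing the sampled spherical harmonics in $V^\ast$ with a true $d$-dimensional eigenspace of $T_n$.

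For the variance term I would invoke a Davis--Kahan $\sin\theta$ estimate adapted to the cluster $\Lambda_1$. Since both clusters are isolated by at least $\rho_n\Delta^\ast/2$ on $\mathcal E$, a projection perturbation bound yields
\[
\|\hat V\hat V^T-\tilde V\tilde V^T\|_F\lesssim \frac{\sqrt d}{\rho_n\Delta^\ast}\,\|\hat T_n-T_n\|_{op}.
\]
Combined with the operator-norm bound \eqref{eq:banvanesp}, in the relatively sparse regime $\rho_n=\Omega(\log n/n)$ this contribution is of order $\sqrt{\log n}/(n\rho_n)$ up to constants, which for $d\ge 2$ is dominated by $n^{-s/(2s+d-1)}$ and therefore does not drive the rate.

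For the bias term I would use Koltchinskii--Gin\'e-type reasoning. The columns $v^\ast_j$ of $V^\ast$ are sampled evaluations of the degree-one spherical harmonics, so by the Funck--Hecke formula they are approximate eigenvectors of $T_n$ with residual expressible through the higher-degree Gegenbauer expansion of $f$. For $f\in Z^s_\gamma([-1,1])$, the eigenvalues $\lambda_k^\ast$ decay so that the tail $\sum_{k>K}d_k(\lambda_k^\ast)^2$ is controlled by a constant times $K^{-2s/(d-1)}$; a truncation at level $K\sim n^{(d-1)/(2s+d-1)}$ balances this bias against a $K/n$ sampling fluctuation, giving $\|\tilde{\mathcal G}-\mathcal G^\ast\|_F=O(n^{-s/(2s+d-1)})$. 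The minimax optimality claim is then a standard Fano/Assouad argument for estimating a Sobolev regression function on $\mathbb S^{d-1}$, see \cite[Chp.2]{Nemirovski}.

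The principal obstacle is the bias term. Classical Davis--Kahan machinery assumes both subspaces are represented by orthogonal projections, whereas the columns of $V^\ast$ are only \emph{approximately} orthonormal, being orthonormal in expectation via $\int\phi_j^2\,d\sigma=1$ and the law of large numbers. One therefore has to adapt the projection-perturbation argument to a nearly orthogonal basis, controlling the Gram distortion $(V^\ast)^TV^\ast-\mathrm{Id}_d$ in a Frobenius-sensitive way that does not inflate the nonparametric rate; this is precisely the ``nearly orthogonal'' extension alluded to in the introduction, and is the technical crux of the proof.
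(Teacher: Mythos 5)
Your two-term decomposition into a variance term $\|\hat{\mathcal G}-\tilde{\mathcal G}\|_F$ and a bias term $\|\tilde{\mathcal G}-\mathcal G^\ast\|_F$ (with $\tilde{\mathcal G}$ playing the role of the paper's $\mathcal G$) is exactly the paper's starting point, and your treatment of the variance term via Davis--Kahan for the cluster $\Lambda_1$ combined with the Bandeira--Van Handel operator-norm bound on the event $\mathcal E$ is the same argument as in the paper. You also correctly identify that the crux is the bias term and the near-orthogonality of $V^\ast$.

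However, for the bias term you only describe a heuristic (decay of $\lambda_k^\ast$, a truncation level, a "$K/n$ sampling fluctuation") and then stop at exactly the point where the work has to happen, explicitly flagging the ``nearly orthogonal'' extension as an unresolved technical crux. The paper's proof fills this gap with a concrete three-way sub-decomposition
\[
\|\mathcal{G}^\ast-\mathcal{G}\|_F\le \|\mathcal{G}^\ast-\mathcal{G}^\ast_{proj}\|_F+\|\mathcal{G}^\ast_{proj}-\mathcal{G}_R\|_F+\|\mathcal{G}_R-\mathcal{G}\|_F,
\]
where $\mathcal G^\ast_{proj}$ is the exact projection onto $\operatorname{span}(V^\ast)$ and $\mathcal G_R$ is built from a rank-$R$ truncation $T_R$ of $T_n$ with $R\sim (n/\log n)^{1/(2s+d-1)}$. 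Each piece is handled by a specific tool: (i) $\|\mathcal{G}^\ast-\mathcal{G}^\ast_{proj}\|_F=\|\mathrm{Id}_d-(V^\ast)^T V^\ast\|_F$ by Lemma~\ref{lem:projapprox}, controlled by sub-gaussian covariance concentration (Theorem~\ref{thm:gramconcen}), giving $O(d/\sqrt n)$; (ii) $\|\mathcal G_R-\mathcal G\|_F$ by Davis--Kahan applied to $T_n$ versus $T_R$; and (iii) the delicate cross term $\|\mathcal{G}^\ast_{proj}-\mathcal{G}_R\|_F$ via the spectral-projection perturbation bound (Theorem~\ref{thm:perturbeigensp}) applied to $T_R$ and its Gram--Schmidt-orthonormalized surrogate, together with Ostrowski's eigenvalue inequality (Corollary~\ref{cor:ostrowski}) and the concentration of $\Phi_{0,R}^T\Phi_{0,R}$. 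Your plan would need to articulate this (or an equivalent) sub-decomposition before one can call it a proof; as written, the bias term is asserted rather than established.
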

The condition $\Delta^\ast>0$ allow us to use Davis-Kahan type results for matrix perturbation to prove Theorem \ref{thm:mainthm}. With this and concentration for the spectrum we are able to control with high probability the terms $\|\hat{\mathcal{G}}-\mathcal{G}\|_F$ and $\|\mathcal{G}-~\mathcal{G}^\ast\|_F$. The rate of nonparametric estimation of a function in $S^{d-1}$ can be found in \cite[Chp.2]{Nemirovski}.

%A similar "eigengap" condition was used in \cite[Thm.2.2]{ChatSpectral}, where the authors studied the spectral clustering method for the Stochastic Blockmodel. They proved the convergence of the eigenvectors of the observed graph laplacian to the population version. Similar results, which also assume an "eigengap" condition are obtained in in the context of manifold learning \cite{ManLearning}. A key difference between those results and ours is that the "sampled" eigenfunctions of the graphon, which we defined by $v^\ast_j$ in Section \ref{sec:eigen}, are not necessarily orthogonal. We prove in the supplementary material that we can overcome this problem by using the fact that they are "approximately" orthogonal, in the sense that ${V^\ast}^TV^\ast$ is well concentrated around $\mathrm{Id}_d$. In \cite{TangSuss} the authors propose a similar method to consistently estimate the feature maps for latent position graphs with positive definite kernel functions. The same authors study in \cite{Sussman} the related problem of estimate the latent positions for the random dot product graph model, where the probability matrix is given by $\Theta_{ij}=\langle X_i,X_j \rangle$. However, it is clear that in this case we could not consider the uniform distribution because of the restriction $0\leq \Theta_{ij}\leq 1$. 

\section{Algorithms}\label{sec:alg}
The Harmonic EigenCluster algorithm(HEiC) (see Algorithm \ref{alg:recovery2} below) receives the observed adjacency matrix $\hat{T}_n$ and the sphere dimension as its inputs to reconstruct the eigenspace associated to the eigenvalue $\lambda^\ast_1$. In order to do so, the algorithm selects $d$ vectors in the set $\hat{v}_1,\hat{v}_2,\cdots \hat{v}_n$, whose linear span is close to the span of the vectors $v^\ast_1,v^\ast_2,\cdots,v^\ast_d$ defined in Section \ref{sec:eigen}. The main idea is to find a subset of $\{\hat{\lambda}_0,\hat{\lambda}_2,\cdots,\hat{\lambda}_{n-1}\}$, which we call $\Lambda_1$, consisting on $d_1$ elements (recall that $d_1=d$) and where all its elements are close to $\lambda^\ast_1$. This can be done assuming that the event $\mathcal{E}$ defined above holds (which occurs with high probability). Once we have the set $\Lambda_1$, we return the span of the eigenvectors associated to the eigenvalues in $\Lambda_1$.

For a given set of indices $i_1,\cdots,i_d$ we define \[\operatorname{Gap}_1(\hat{T}_n; i_1,\cdots,i_d):=\min_{i\notin\{i_1,\cdots,i_d\}}\max_{j\in \{i_1,\cdots,i_j\}}|\hat{\lambda}_j-\hat{\lambda}_i|\] and 
\[\operatorname{Gap}_1(\hat{T}_n):=\max_{\{i_1,\cdots,i_d\}\in \mathcal{S}_d^n}{\operatorname{Gap}_1(\hat{T}_n; i_1,\cdots,i_d)}\] where $\mathcal{S}_d^n$ contains all the subsets of $\{1,\cdots,n-1\}$ of size $d$. This definition parallels that of $\operatorname{Gap}_1(W)$ for the graphon. Observe any set of indices in $\mathcal{S}_d^n$ will not include $0$. Otherwise stated, we can leave $\hat{\lambda}^{\mathrm{sort}}_0$ out of this definition and it will not be candidate to be in $\Lambda_1$. In the supplementary material we prove that the largest eigenvalue of the adjacency matrix will be close to the eigenvalue $\lambda^\ast_0$ and in consequence can not be close enough to $\lambda^\ast_1$ to be in the set $\Lambda_1$, given the definition of the event $\mathcal{E}$. 

  To compute $\operatorname{Gap}_1(\hat{T}_n)$ we consider the set of eigenvalues $\hat{\lambda}_j$ ordered in decreasing order. We use the notation $\hat{\lambda}^{\mathrm{sort}}_j$ to emphasize this fact. We define the right and left differences on the sorted set by \begin{align*}
    \mathrm{left}(i)&=|\hat{\lambda}^{\mathrm{sort}}_i-\hat{\lambda}^{\mathrm{sort}}_{i-1}|\\
    \mathrm{right}(i)&=\mathrm{left}(i+1)
\end{align*} where $\mathrm{left}(\cdot)$ is defined for $1\leq i\leq n$ and $\mathrm{right}(\cdot)$ is defined for $0\leq i\leq n-1$.
With these definition, we have the following lemma, which we prove in the supplementary material.%\[\operatorname{Gap}_{<n-d}=\max_{1\leq i\leq n-d}{\min{\{\mathrm{left}(i),\mathrm{right}(i+d)\}}}\] and 
%\[\operatorname{gap}=\max{\{\operatorname{gap}_{<n-d},\mathrm{left}(n-d+1)\}}\]
\begin{lemma}\label{lem:eq.defsgap}
On the event $\mathcal{E}$, the following equality holds
\[\operatorname{Gap}_1(\hat{T}_n)=\max{\Big\{\max_{1\leq i\leq n-d}{\min{\{\mathrm{left}(i),\mathrm{right}(i+d)\}}},\mathrm{left}(n-d+1)\Big\}}\]
\end{lemma}
%The previous is definition is equivalent to consider sets of $d$ consecutive eigenvalues, in the sorted eigenvalues set, and compute the distance of these sets to the rest of the spectrum and then take the maximum of these distances. 
%A few remarks on the previous equality. We can leave $\hat{\lambda}^{\mathrm{sort}}_0$ out of this procedure, because we know that the largest eigenvalue of adjacency matrix will be close to the eigenvalue $\lambda^\ast_0$ (this follows form Theorem \ref{thm:lambda0largest} in Appendix \ref{sec:largest}) and in consequence can not be close enough to $\lambda^\ast_1$ to be in the set $\Lambda_1$, given the definition of the event $\mathcal{E}$. Thus we do not include $\hat{\lambda}_0$ as candidate to belong to the $d$-sized set $\Lambda_1$. 
%\begin{proof}
%The lemma follows from Proposition \ref{prop:mainprop}. Indeed, on the event $\mathcal{E}$ there exist only one set $\Lambda_1$ of eigenvalues of $\hat{T}_n$ with cardinality $d$ , whose distance to the rest of the spectrum is larger that $\Delta^\ast$ and its diameter is smaller that $\Delta^\ast$. When sorting the eigenvalues of $\hat{T}_n$ in decreasing order, those belonging to $\Lambda_1$ will appear in consecutive order. The lemma follows from this observation and from the fact $\operatorname{Gap}_1(\hat{T}_n ;i_{n-d-1},\cdots,i_{n-1})=\mathrm{left}(n-d-1)$.
%\end{proof}
%Please note that $\mathrm{right(n)}$ is not defined, thus for the set $\hat{\lambda}^\mathrm{sort}_{n-d-1},\cdots,\hat{\lambda}^\mathrm{sort}_{n-1}$ the gap is given by $\mathrm{left}(n-d-1)$.

The set $\Lambda_1$ has the form $\Lambda_1=\{\hat{\lambda}^{\mathrm{sort}}_{i^\ast},\hat{\lambda}^{\mathrm{sort}}_{i^\ast+1},\cdots,\hat{\lambda}^{\mathrm{sort}}_{i^\ast+d}\}$ for some $1\leq i^\ast\leq n-d$.
%We call $i^\ast$ the first index of the eigenvalues in the cluster, that is the set that maximize the gap is $\hat{\lambda}^{\mathrm{sort}}_{i^\ast},\hat{\lambda}^{\mathrm{sort}}_{i^\ast+1},\cdots,\hat{\lambda}^{\mathrm{sort}}_{i^\ast+d}$.
 We have that either \[i^\ast=\argmax_{1\leq i\leq n-d}{\min{\{\mathrm{left}(i),\mathrm{right}(i+d)\}}}\] or $i^\ast=n-d$ depending whether or not one has $\max_{1\leq i\leq n-d}{\min{\{\mathrm{left}(i),\mathrm{right}(i+d)\}}}>\mathrm{left}(n-d+1)$. The algorithm then constructs the matrix $\hat{V}$ having columns $\{\hat{v}_{i^\ast},\hat{v}_{i^\ast+1},\cdots,\hat{v}_{i^\ast+d}\}$ and returns $\hat{V}\hat{V}^T$.

It is worth noting that Algorithm \ref{alg:recovery2} time complexity $n^3+n$, where $n^3$ comes from the fact that we compute the eigenvalues and eigenvectors of the $n\times n$ matrix $\hat{T}_n$ and the linear term is because we explore the whole set of eigenvalues to find the maximum gap for the size $d$. In terms of space complexity the algorithm is $n^2$ because we need to store the matrix $\hat{T}_n$.
\begin{algorithm}[tb]
   \caption{Harmonic EigenCluster(HEiC) algorithm}
   \label{alg:recovery2}
\begin{algorithmic}
    \STATE {\bfseries Input:} $(\hat{T}_n,d)$ adjacency matrix and sphere dimension %dimension $d1$
    \STATE $\Lambda^\mathrm{sort}=\{\hat{\lambda}^\mathrm{sort}_1,\cdots,\hat{\lambda}^\mathrm{sort}_{n-1}\} \leftarrow $eigenvalues of $\hat{T}_n$ sorted in decreasing order 
    \STATE $\Lambda_1\leftarrow \{\Lambda^\mathrm{sort}_1,\cdots,\Lambda^\mathrm{sort}_{1+d}\}$: where $\Lambda_i^\mathrm{sort}$ is the $i$-th element in $\Lambda^\mathrm{sort}$
    \STATE Initialize $i=2$, $\operatorname{gap}=\operatorname{Gap}_1(\hat{T}_n; 1,2,\cdots,d)$
    \WHILE{$i \leq n-d$}
    \IF{$ \operatorname{Gap}_1(\hat{T}_n; i,i+1,\cdots,i+d)>\operatorname{gap}$}
    \STATE $\Lambda_1\leftarrow \{\Lambda^\mathrm{sort}_i,\cdots,\Lambda^\mathrm{sort}_{i+d}\}$ 
    \ENDIF
    \STATE $i=i+1$
    \ENDWHILE
    \STATE {\bfseries Return:} $\Lambda_1$, $\operatorname{gap}$
\end{algorithmic}
\end{algorithm}

\begin{remark}
If we change $\hat{T}_n$ in the input of Algorithm \ref{alg:recovery2} to $\hat{T}^{\mathrm{usvt}}_n$ (obtained by the UVST algorithm \cite{Chatterjee}) we predict that the algorithm will give similar results. This is because discarding some eigenvalues bellow a prescribed threshold do not have effect on our method. However, as preprocessing step the UVST might help in speeding up the eigenspace detection, but this step is already linear in time. The study of the effect of UVST as preprocessing step is left for future work. 
\end{remark}

\subsection{Estimation of the dimension $d$}
So far we have focused on the estimation of the population Gram matrix $\mathcal{G}^\ast$. We now give an algorithm to find the dimension $d$, when it is not provided as input. This method receives the matrix $\hat{T}_n$ as input and uses Algorithm \ref{alg:recovery2} as a subroutine to compute a score, which is simply the value of the variable $\operatorname{Gap}_1(\hat{T}_n)$ returned by Algorithm \ref{alg:recovery2}. We do this for each $d$ in a set of candidates, which we call $\mathcal{D}$. This set of candidates will be usually fixed to $\{1,2,3,\cdots,d_{max}\}$. Once we have computed the scores, we pick the candidate that have the maximum score. %In the case of ties, we return the the smaller $d$. For instance, if the scores for $d=3$ and for $d=7$ are the same, we return $d=3$.

Given the guarantees provided by Theorem \ref{thm:mainthm}, the previously described procedure will find the correct dimension, with high probability (on the event $\mathcal{E}$), if the true dimension of the graphon is on the candidate set $\mathcal{D}$. This will happen, in particular, if the assumptions of Theorem \ref{thm:mainthm} are satisfied. We recall that the main hypothesis on the graphon is that the spectral gap $\operatorname{Gap}_1(W)$ should be different from $0$. %The Python code of these experiments can be found at \url{https://github.com/ErnestoArayaValdivia/Graph-estimation}.

\section{Experiments}
We generate synthetic data using different geometric graphons. In the first set of examples, we focus in recovering the Gram matrix when the dimension is provided. In the second set we tried to recover the dimension as well. The Python code of these experiments is provided in the supplementary material. %can be found at \url{https://github.com/ErnestoArayaValdivia/Graph-estimation}. %Here the subindices on the graphons do not make reference to its finite rank approximation as in Section \ref{sec:graphspec}. 

\subsection{Recovering the Gram matrix}
We start by considering the graphon $W_1(x,y)=\mathbbm{1}_{\langle x,y\rangle\leq 0}$ which defines, through the sampling scheme given in Section \ref{sec:model}, the same random graph model as the classical RGG model on $\mathbb{S}^{d-1}$ with threshold $0$. Thus two sampled points $X_i,X_j\in \mathbb{S}^{d-1}$ will be connected if and only if they lie in the same semisphere. 

%We start with a finite rank graphon $W_1$ on the sphere of dimension $d=4$ and with coefficients $\lambda^\ast_0=0.5,\lambda^\ast_1=0.34,\lambda^\ast_5=-0.14,\lambda^\ast_5=0.08$, that is 
%\begin{align*}
%W_1(x,y)=&0.5+0.34\times G^\gamma_1(\langle x,y \rangle)-0.14\times G^\gamma_3(\langle x,y \rangle)\\ 
%&+0.08\times G^\gamma_5(\langle x,y \rangle)\\
%=&f_1(\langle x,y\rangle)\,,
%\end{align*}
%where $f_1$ is displayed in Figure~\ref{fig:f1}.
%%%%%%%%%%%%%%%%%%%%%%%%%%%%%%%%%%%%%%%%%%%%%%%%%%%%%%%%%%%
%\begin{figure}[h!]
%    \centering
%    \includegraphics[scale=0.3]{ErrorGraphon1.png}
%    \caption{Boxplot of $\log{MSE_n}$ for the graphon $W_1$ with different samples sizes.}
%    \label{fig:boxplot}
%\end{figure}
%%%%%%%%%%%%%%%%%%%%%%%%%%%%%%%%%%%%%%%%%%%%%%%%%%%%%%%%%%

\begin{figure}[h!]
\vskip 0.2in
\begin{center}
\begin{minipage}[t]{5cm}
\centering
\includegraphics[scale=0.25]{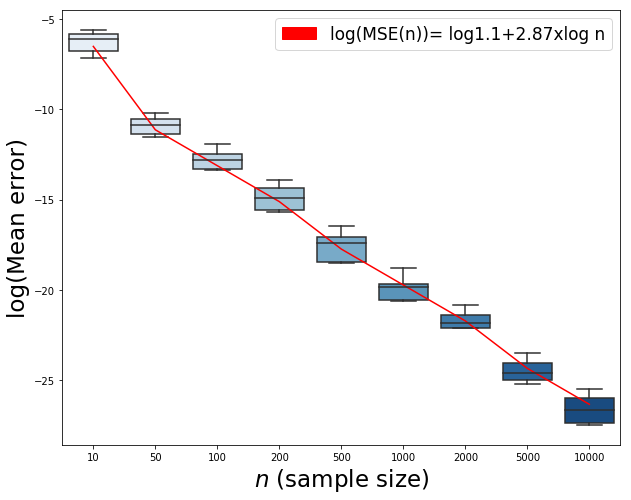}
\end{minipage}
\hspace{0.5cm}
\begin{minipage}[t]{5cm}
\centering
\includegraphics[scale=0.25]{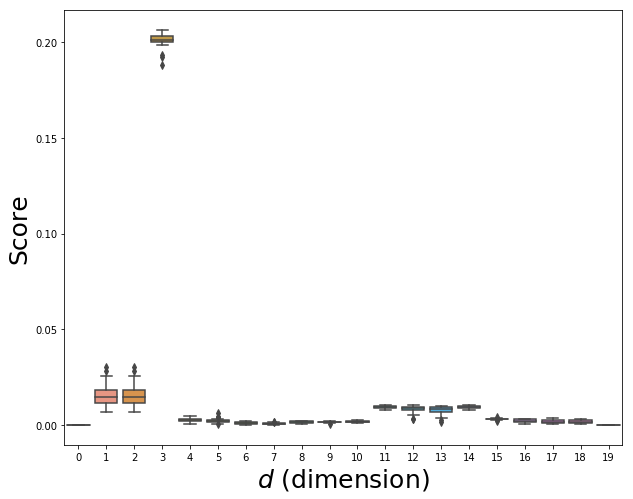}
\end{minipage}
\hspace{0.2cm}
%\begin{minipage}[t]{3cm}
%\centering
%\includegraphics[scale=0.2]{boxplot_estd3.png}
%\end{minipage}
%\hspace{0.5cm}
%\begin{minipage}[t]{3cm}
%\centering
%\includegraphics[scale=0.2]{boxplot_estd4.png}
%\end{minipage}
\caption{In the left we have a boxplot of $MSE_n$ for different values of $n$. In the right, we plot the score for a set of candidate dimensions $\mathcal{D}=\{1,\cdots,15\}$. Data was sampled with $W_1$ on $\mathbb{S}^{d-1}$ with $d=3$.}
\label{fig:boxplot_final}
\end{center}
\vskip -0.2in
\end{figure}
We consider different values for the sample size $n$ and for each of them we sample $100$ Gram matrices in the case $d=3$ and run the Algorithm \ref{alg:recovery2} for each. We compute each time the mean squared error, defined by \[MSE_n=\frac{1}{n^2}\|\hat{\mathcal{G}}-\mathcal{G}^\ast\|^2_F\]
%We consider a set of different values for the sample size~$n$ and for each~$n$ we sample $100$ Gram matrices and run the Algorithm \ref{alg:recovery2} for each. We compute each time the mean squared error, defined by 
%\[MSE_n=\frac{1}{n}\|\hat{\mathcal{G}}-\mathcal{G}^\ast\|_F\]
%A boxplot with the results is shown in Figure \ref{boxplot_mse}. We see that as predicted, the MSE converges to $0$. This is not surprising given that the population eigenvalue gap in this case is $|\mu^\ast_0-\mu^\ast_1|=0.16$ which allows for an efficient detection of the eigenspace associated to $\mu^\ast_1$. 

%We next considered the following graphons on the sphere of dimension $d=3$ defined by 
%\begin{align*}
%    W_2(x,y):=&\mathbbm{1}_{\langle x,y \rangle \geq 0}\\
%    W_3(x,y):=&\mathbbm{1}_{-0.25\leq\langle x,y \rangle\leq 0}+\mathbbm{1}_{\langle x,y \rangle\leq 0.5}\\
%    W_4(x,y):=&\mathbbm{1}_{-0.75\leq\langle x,y\rangle\leq 0.5}+\mathbbm{1}_{-0.25\leq\langle x,y\rangle\leq 0}\\&+\mathbbm{1}_{0.25\leq\langle x,y\rangle\leq 0.5}+\mathbbm{1}_{\langle x,y\rangle\geq 0.75}
%\end{align*}

In Figure \ref{fig:boxplot_final} we put the $MSE_n$ for different values of $n$, showing how $MSE_n$ decrease in terms of $n$. For each $n$, the $MSE_n$ we plot is the mean over the $100$ sampled graphs. %In Figure \ref{fig:spec+error} we plot the ordered "thresholded" spectrum of the matrix $\hat{T}_n$ in the case $n=1000,d=3$. We choose to include only the eigenvalues of $\hat{T}_n$ which are bigger than $0.01$ in absolute value, to obtain a clearer picture. The purpose is illustrate the ``stepwise" structure of the spectrum. We can clearly see how the ``eigengap" condition is verified in practice. With the same parameters $d$ and $n$, we sample $100$ graphs and construct the mean entrywise error matrix, that is the matrix with entries $E_{ij}=\frac{1}{100}|\mathcal{G}^\ast_{ij}-\hat{\mathcal{G}}_{ij}|$ and compute the blockwise mean of $E$ with blocks of size $50\times 50$. Note that the block with the minimum mean entrywise error has an error of $0.046$ and the block that attains the maximum has error $0.48$, thus the difference is not so big. Nevertheless, we recall that this is the angle between two points on the sphere and the tolerance allowed may depend on the particular application. %We can see that for the graphon $W_4$, the error do not seems to converge (at least for $n=2000$) to the error of the graphon $W_2$, contrary to what happens with graphon $W_3$, whose error converge to the $MSE_n$If $W_2$. This, in the light of Theorem \ref{thm:mainthm} suggest that there are differences in the regularity regimes at least between $W_2$ and $W_4$. 
%\begin{figure}[ht]
%\vskip 0.2in
%\centering
%\begin{minipage}[t]{5cm}
%\centering
%\includegraphics[scale=0.4]{graphonindicator.png}
%\end{minipage}
%\hspace{2cm}
%\begin{minipage}[t]{5cm}
%\centering
%\includegraphics[scale=0.4]{Graphonser.png}
%\end{minipage}
%\hspace{2cm}
%\begin{minipage}[t]{5cm}
%\centering
%\includegraphics[scale=0.4]{graphonserrucho.png}
%\end{minipage}
%\hspace{2cm}
%\begin{minipage}[t]{5cm}
%\centering
%\includegraphics[scale=0.165]{3moregraph.png}
%\end{minipage}
%\caption{$MSE_n$ for three different graphons in dimension $d=3$ in logarithmic scale: $W_2$ (up left) in black, $W_3$ (up right) in green and $W_4$ (down left) in red. }
%\label{fig:3graphon}
%\vskip -0.2in
%\end{figure}
%%%%%%%%%%%%%%%%%%%%%%%%%%%%%%%%%%%%%%FIG2%%%%%%%%
%\begin{figure}[ht]
%\vskip 0.2in
%\centering
%\begin{minipage}[t]{5cm}
%\centering
%\includegraphics[scale=0.2]{Thresholdedeigen.png}
%\end{minipage}
%\hspace{2cm}
%\begin{minipage}[t]{5cm}
%\centering
%\includegraphics[scale=0.49]{colormaperrorBlocks.png}
%\end{minipage}
%\hspace{2cm}

%\caption{In the left we plot the ordered eigenvalues of $\hat{T}_n$ obtained with $W_1$, $d=3$ and $n=1000$. In the right we a color representation of the matrix with entries $|\mathcal{G}^\ast_{ij}-\hat{\mathcal{G}}_{ij}|$ after taking mean by blocks of size $50\times 50$.}
%\label{fig:spec+error}
%\vskip -0.2in
%\end{figure}

%%%%%%%%%%%%%%%%%%%%%%%%%%%%%%%%%%%%%%%%%%%%%%%%%%%%%%

\subsection{Recovering the dimension $d$}

We conducted a simulation study using graphon $W_1$, sampling $1000$ point on the sphere of dimension $d=3$ and we use Algorithm \ref{alg:recovery2} to compute a score and recover $d$. We consider a set of candidates with $d_{max}=15$. In Figure \ref{fig:boxplot_final} we provide a boxplot for the score of each candidate repeating the procedure $50$ times. We see that for this graphon, the algorithm can each time differentiates the true dimension from the ``noise". We include more experiments in the supplementary material. 
\section{Discussion}
%Given a Gram matrix $\mathcal{G}^\ast$ we can recover the original latent points positions $\{X_i\}_{i=1}^n$, but only up a symmetries on the sphere.  This can be accomplished by consider the Cholesky decomposition of the Gram matrix $\mathcal{G}^\ast$, which is symmetric and positive semidefinite by definition. If we have \[\mathcal{G}^\ast=LL^T\] then
Although on this paper we have focused on the sphere as the latent metric space, our main result can be extended to other latent space where the distance is translation invariant, such as compact Lie groups or compact symmetric spaces. In that case, the geometric graphon will be of the form $W(x,y)=f(\cos{\rho(x,y)})$ where $x,y$ are points in the compact Lie group $\mathbbm{S}$ and $\rho(\cdot,\cdot)$ is the metric in this space. We will have \[f(\cos{\rho(x,y)})=f(\cos{\rho(x\cdot y^{-1},e_1)})=\tilde{f}(x\cdot y^{-1})\] where $e_1$ is the identity element in $\mathbbm{S}$ and $\tilde{f}(x)=f(\rho(x,e_1))$. In consequence $W(x,y)=\tilde{f}(x\cdot y^{-1})$. In addition, there exist an addition theorem in this case (which is central in our recovery result). Similar regularity notions to the one considered in this work also exist. They are related to rate of convergence to zero of the eigenvalues of integral operator associated to the graphon. In \cite{Yohann} the authors give more details on the model of geometric graphon in compact lie groups with focus on the estimation of the graphon function.

\bibliography{bibGramMat}
\bibliographystyle{plain}

\appendix
\section{Graphon regularity}\label{regularity}
One way to define the regularity of a geometric graphon on $\mathbb{S}^{d-1}$ is through the notion of weighted Sobolev spaces on the interval $[-1,1]$. In that context, the regularity is related to the rate at which the eigenvalue sequence $\{\lambda^\ast_i\}^\infty_{i=0}$ tends to $0$. Here we follow \cite{Nica}. For a function of the form $f(t)=\sum_{k\geq 0}\mu_kc_kG^\gamma_k(t)$, we define the norm \[\|f\|^2_{Z^s_\gamma}=\sum^\infty_{k=0}d_k|\mu_k|^2\big(1+k(k+2\gamma+1))^s\big)\]
We will say that $f$ belongs to weighted Sobolev space $Z^s_\gamma$ if $\|f\|_{Z^s_\gamma}\leq \infty$. We will refer to $s$ as the regularity parameter. As in the case of classical Sobolev spaces, there is a definition of weighted Sobolev spaces that involves the integrability (with respect to the measure $w_\gamma(t)dt$) of the weak derivatives of a function. Then a function $f$ belongs to $Z^s_\gamma$ if it has $s$ weak derivatives that are integrable with respect to the weighted $L^2$ norm in $[-1,1]$ with weight $w_\gamma(t)=(1-t)^{\gamma-\frac{1}{2}}$. In \cite{Nica} the authors prove that both definitions are in fact equivalent.

\section{Geometric Graphons have $\lambda^\ast_0$ as the largest eigenvalue}\label{sec:largest}
To avoid border issues in Algorithm $1$, we use the fact that the eigenvalue $\mu^\ast_0$ associated to the Gegenbauer polynomial $G^\gamma_0(t)=\mathbbm{1}(t):=1$ for $t\in [-1,1]$ is the largest one, which in the notation of the paper can be written as $\lambda^{\mathrm{sort}}_0=\mu^\ast_0$. This is true for all geometric graphons.
\begin{lemma}\label{lem:lambda0largest}
If $W:\mathbb{S}^{d-1}\times \mathbb{S}^{d-1}\rightarrow [0,1]$ is such that \[W(x,y)=f(\langle x,y \rangle)\] for $f:[-1,1]\rightarrow [0,1]$, then \[d_W(x):=\int_{\mathbb{S}^{d-1}}W(x,y)d\sigma(y)\] is constant. 
\end{lemma}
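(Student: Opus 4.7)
The plan is to exploit the rotation invariance of the uniform measure $\sigma$ together with the fact that $W$ depends on $x$ and $y$ only through the inner product $\langle x,y\rangle$, which is itself rotation invariant. Since the orthogonal group $O(d)$ acts transitively on $\mathbb{S}^{d-1}$, proving that $d_W$ is $O(d)$-invariant immediately forces it to be constant.

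Concretely, I would fix an arbitrary $R\in O(d)$ and compute $d_W(Rx)$ by the change of variables $y\mapsto Ry$. Two ingredients enter here: first, the uniform measure $\sigma$ is invariant under orthogonal transformations, so $d\sigma(Ry)=d\sigma(y)$; second, $\langle Rx,Ry\rangle=\langle x,y\rangle$ because $R$ preserves the Euclidean inner product. Combining these gives
\[
d_W(Rx)=\int_{\mathbb{S}^{d-1}} f(\langle Rx,y\rangle)\,d\sigma(y)=\int_{\mathbb{S}^{d-1}} f(\langle Rx,Ry\rangle)\,d\sigma(y)=\int_{\mathbb{S}^{d-1}} f(\langle x,y\rangle)\,d\sigma(y)=d_W(x).
\]

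To finish, I would note that for any two points $x_1,x_2\in\mathbb{S}^{d-1}$ there exists $R\in O(d)$ with $Rx_1=x_2$ (transitivity of the orthogonal action on the sphere), so $d_W(x_1)=d_W(x_2)$. Hence $d_W$ is a constant function on $\mathbb{S}^{d-1}$, as claimed.

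I do not anticipate a real obstacle here: the statement is essentially the observation that a convolution on a homogeneous space of a constant function against a zonal kernel is constant. The only thing requiring a touch of care is invoking rotation invariance of $\sigma$ and the change-of-variable formula cleanly; once those are in place, transitivity of $O(d)$ on $\mathbb{S}^{d-1}$ immediately concludes the argument.
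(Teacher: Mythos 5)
Your argument is correct and is exactly the change-of-variable argument the paper gestures at (the paper's entire proof is ``The proof follows from a change of variable''). You have simply spelled out the details: rotation invariance of $\sigma$, invariance of the inner product under $O(d)$, and transitivity of the $O(d)$-action on $\mathbb{S}^{d-1}$.
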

\begin{proof}
The proof follows from a change of variable.
\end{proof}

The following theorem is an analogous result to a classical theorem of spectral graph theory
\begin{theorem}\label{thm:lambda0largest}
For a graphon $W:\mathbb{S}^{d-1}\times \mathbb{S}^{d-1}\to [0,1]$ we have 
\[
\int_{\mathbb{S}^{d-1}\times \mathbb{S}^{d-1}}W(x,y)d\sigma(x)d\sigma(y)\leq \lambda^\ast_0 \leq \max_{x\in\mathbb{S}^{d-1}}d(x)
\]
\end{theorem}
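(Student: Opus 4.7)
The plan is to prove both inequalities using the variational (Rayleigh quotient) characterization of the largest eigenvalue of the compact self-adjoint operator $T_W$. Since $T_W$ is compact and self-adjoint on $L^2(\mathbb{S}^{d-1},\sigma)$ (as already recorded in the paper), the spectral theorem gives
\[
\lambda_0^\ast \;=\; \sup_{\|f\|_{L^2}=1}\,\langle T_W f,f\rangle \;=\; \sup_{\|f\|_{L^2}=1}\int_{\mathbb{S}^{d-1}\times\mathbb{S}^{d-1}} W(x,y)\,f(x)f(y)\,d\sigma(x)d\sigma(y),
\]
and both bounds will follow from judicious choices of test function and pointwise manipulation of the integrand.

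For the lower bound, I plug the constant function $f\equiv 1$ into the Rayleigh quotient. Because $\sigma$ is a probability measure, $\|\mathbbm{1}\|_{L^2}=1$, so the variational formula directly gives
\[
\lambda_0^\ast \;\geq\; \langle T_W \mathbbm{1},\mathbbm{1}\rangle \;=\; \int_{\mathbb{S}^{d-1}\times\mathbb{S}^{d-1}} W(x,y)\,d\sigma(x)d\sigma(y).
\]

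For the upper bound, the key input is that $W\geq 0$ (it takes values in $[0,1]$). First, since the integrand $W(x,y)f(x)f(y)$ is dominated by $W(x,y)|f(x)||f(y)|$, I can restrict the supremum to nonnegative $f$. Then I apply the elementary inequality $f(x)f(y)\leq \tfrac{1}{2}(f(x)^2+f(y)^2)$ inside the integral and use the symmetry $W(x,y)=W(y,x)$ to collapse the two resulting terms, obtaining
\[
\int\int W(x,y)\,f(x)f(y)\,d\sigma(x)d\sigma(y) \;\leq\; \int_{\mathbb{S}^{d-1}} f(x)^2\,d_W(x)\,d\sigma(x) \;\leq\; \max_{x\in\mathbb{S}^{d-1}} d_W(x),
\]
since $\|f\|_{L^2}=1$. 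Taking the supremum over admissible $f$ yields $\lambda_0^\ast\leq \max_x d_W(x)$.

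I do not expect any real obstacle here; everything reduces to the Rayleigh quotient and a single AM--GM step. The only subtle point worth writing out carefully is the reduction to nonnegative $f$ (which exploits $W\geq 0$) and the use of Fubini plus the symmetry of $W$ to rewrite the averaged expression as $\int f^2 d_W\,d\sigma$. Note also that the lower bound is in fact an equality for geometric graphons, because Lemma \ref{lem:lambda0largest} shows $d_W$ is constant, so the constant function is itself the top eigenfunction; however the statement above is written for a general graphon on the sphere, and only the inequality is needed.
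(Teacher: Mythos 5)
Your lower bound is identical to the paper's: test the Rayleigh quotient of $T_W$ against the constant function $\mathbbm{1}$, using that $\sigma$ is a probability measure. For the upper bound, however, you take a genuinely different and arguably cleaner route. The paper picks a top eigenfunction $f_0$, selects a point $x^\ast$ where $f_0$ attains its maximum, and reads the bound off the eigenvalue equation $\lambda_0^\ast = T_W f_0(x^\ast)/f_0(x^\ast) \leq d_W(x^\ast)$. That argument needs the top eigenfunction to attain a (strictly positive) maximum pointwise, which implicitly relies on continuity and a sign normalization that the paper handles only loosely (``Without loss of generality, assume that $f_0(x^\ast)\neq 0$''). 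Your argument stays entirely at the level of the variational characterization: restrict to nonnegative $f$ (valid because $W\geq 0$ makes $f\mapsto\langle T_W f,f\rangle$ increase when $f$ is replaced by $|f|$), apply $f(x)f(y)\le\tfrac12(f(x)^2+f(y)^2)$, and use symmetry of $W$ to collapse to $\int f^2\,d_W\,d\sigma\leq \max_x d_W(x)$. This sidesteps any pointwise properties of the eigenfunction and works directly for any bounded nonnegative kernel; the only small caveat is that for a merely measurable $W$ the right-hand side should really be an essential supremum of $d_W$, a point both your proof and the paper's share.
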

\begin{proof}
By Courant-Fisher min-max principle we have \[\lambda^\ast_0=\max_{f\in L^2([-1,1])}{\frac{\langle T_Wf,f\rangle}{\langle f,f\rangle}}\] In particular, if we take the function $\mathbbm{1}(x):=1$ for $x\in [-1,1]$ we have \begin{align*}
    \lambda^\ast_0&\geq \frac{\langle T_W\mathbbm{1},\mathbbm{1} \rangle}{\langle \mathbbm{1},\mathbbm{1} \rangle }\\
    &=\frac{\int_{\mathbb{S}^{d-1}}W(x,y)d\sigma(x)d\sigma(y)}{\int_{\mathbb{S}^{d-1}d\sigma(y)}}\\
    &=d_W
\end{align*}
the last follows form the definition of $d_W$ and the fact that $\sigma$ is a probability measure on the sphere.
On the other hand, if $f_0$ is an eigenfunction associated with $\lambda_0$ we can choose $x^\ast$ such that $f_0(x^\ast)\geq f_0(x)$ for $x\in[-1,1]$. Without loss of generality, assume that $f_0(x^\ast)\neq 0$. So \begin{align*}
  \lambda^\ast_0&=\frac{T_Wf_0(x^\ast)}{f_0(x^\ast)}\\
  &=\int_{\mathbb{S}^{d-1}}W(x^\ast,y)\frac{f_0(y)}{f_0(x^\ast)}d\sigma(y)\\
  &\leq \int_{\mathbb{S}^{d-1}}W(x^\ast,y)d\sigma(y)\\
  &=d_W(x^\ast)
\end{align*}
which finish the proof
\end{proof}
Since $G^\gamma_0(t)=\mathbbm{1}(t)$ we have by Lemma \ref{lem:lambda0largest} and Theorem~\ref{thm:lambda0largest} that the $\mu^\ast_0=\lambda^{\mathrm{sort}}_0$.

\section{Proof of the rate of convergence of the Algorithm}\label{sec:proof}
This section is devoted to the proof of the main theorem, Theorem~2.2. In the sequel, the sentence ``$n$ large enough'' means that $n$ is bigger than some $n_0\geq1$ that may depend on $W$ and $\alpha$. Recall that the result obtained will hold with probability~$1-\alpha$ with $\alpha>0$ arbitrarily small. We used through the paper, the notation $X\leq_{\alpha}C$, where $X$ is random variable and $C$ a constant, to indicate that the inequality holds with probability bigger than $1-\alpha$. 

The aim is to bound $\|\mathcal G^\ast-\hat{\mathcal G}\|_F$ and we will split it into two terms as follows
\[
\|\mathcal G^\ast-\hat{\mathcal G}\|_F\leq\|\mathcal G^\ast-{\mathcal G}\|_F+\|\mathcal G-\hat{\mathcal G}\|_F
\]
where the matrix $\mathcal G$ will be defined later (see Proposition~\ref{prop:good_event}) using a subset of eigenvectors~$V$ of $T_n$. We will treat these terms separately starting with $\|\mathcal G-\hat{\mathcal G}\|_F$ in Section~\ref{proof:step2} and the other term in Section~\ref{proof:step3}.

The first step is to control the probability of the following event $\mathcal E$
\[\mathcal E:=\Big\{\delta_2\Big(\lambda\big(\frac{1}{\rho_n}T_n\big),\lambda(T_W)\Big)
\vee \frac{2^{\frac92}\sqrt d}{\rho_n\Delta^\ast}\|T_n-\hat{T_n}\|_{op}\leq\frac{ \Delta^\ast}8\Big\}\,,\]
where $\Delta^\ast$ is the spectral gap $Gap_1(W)$. We will prove in Section~\ref{proof:step1} that this event holds with probability $1-\alpha/2$ when $n$ is large enough. This event ensures that the ``noise level'' is lower than the spectral gap $Gap_1(W)$ and it guarantees that our algorithm recovers the right subset of eigenvectors as will see in Proposition~\ref{prop:good_event}, Section~\ref{proof:step1}. 

\subsection{Event guaranteeing the algorithm convergence}
\label{proof:step1}
Invoke Theorem~\ref{thm:bandeira_vanhandel} with $Y=\hat{T}_n-T_n$, which by definition have independent centered entries (conditional to latent points $\{X_i\}_{i=1}^n$), to obtain 
\[\mathbb{P}\Big(\|\hat{T}_n-T_n\|_{op}\geq\frac{3\sqrt{2D_0}}{n}+C_0\frac{\sqrt{\log{n}/\alpha}}{n}\Big)\leq \alpha\]
for $\alpha\in(0,1/3)$. Note that for $n$ large enough, one has 
\begin{equation*}
%\label{eq:BvHeq} 
    \|\hat{T}_n-T_n\|_{op}\leq_{\alpha/4}C\max{\big\{\sqrt{\frac{\rho_n}{n}},\frac{\sqrt{\log n}}{n}\big\}}%\leq \frac{ (\Delta^\ast)^2}{2^{\frac{17}2}\sqrt d}\,,
\end{equation*} 
by Theorem \ref{thm:bandeira_vanhandel}, because $D_0=\max_{0\leq i\leq n}\sum^n_{j=1}\Theta_{ij}(1-\Theta_{ij})$ is $\mathcal{O}(n\rho_n)$. Thus, for $n$ large enough we have
\begin{equation}
\label{eq:BvHeq}
\frac{1}{\rho_n}\|\hat{T}_n-T_n\|_{op}\leq_{\alpha/4}C\max{\big\{\frac{1}{\sqrt{\rho_n n}},\frac{\sqrt{\log n}}{\rho_n n }\big\}}\leq \frac{(\Delta^\ast)^2}{2^{\frac{17}2}\sqrt d},
\end{equation}
provided that $\frac{\sqrt{\log n}}{\rho_n n }=o(1)$, which is the case when $\rho_n=\Omega(\log n/n)$, which we have called the \emph{relatively sparse} case.
Let $V\in\mathbb R^{n\times d}$ and $\hat{V}\in\mathbb R^{n\times d}$ be two matrices with columns corresponding to the eigenvectors associated to eigenvalues $\lambda_{i_1},\lambda_{i_2},\ldots,\lambda_{i_d}$ and $\hat{\lambda}_{i_1},\hat{\lambda}_{i_2},\cdots,\hat{\lambda}_{i_d}$ of $T_n$ and $\hat{T}_n$ respectively, as in Theorem \ref{thm:davis_kahan}. We use Lemma~\ref{lem:matrixineq} with $A=\hat{V}\hat{O}$ and $B=V$, where $\hat{O}$ is an orthogonal matrix, and Theorem \ref{thm:davis_kahan} assuming that the right hand side of~\eqref{eq:thmDK} is smaller than $1$, obtaining 
\begin{align}\|\hat{V}\hat{V}^T-VV^T\|_F&\leq 2\|\hat{V}\hat{O}-V\|_F\nonumber\\ \label{eq:ineq_matproducts} &\leq\frac{2^{\frac{5}{2}}\min{\{\sqrt{d}\|T_n-\hat{T_n}\|_{op},\|T_n-\hat{T_n}\|_{F}\}}}{\Delta} \end{align} where $\Delta:=dist(\{\lambda_{i_1},\cdots,\lambda_{i_d}\},\lambda(T_n)\setminus\{\lambda_{i_1},\cdots,\lambda_{i_d}\})$. %$\Delta:=\min_{k,\ell\neq 1,\ldots, d}{\{|\lambda_{k}-\lambda_{1}|,|\lambda_{d}-\lambda_{\ell}|\}}$. 
Then we have \begin{align}
\|\hat{V}\hat{V}^T-VV^T\|_F& \leq \frac{2^{\frac{5}{2}}\frac{\sqrt{d}}{\rho_n}\|T_n-\hat{T}_n\|_{op}}{\frac{1}{\rho_n}\Delta}\nonumber\\\label{eq:aproxVVT}
&\leq_{\alpha} \frac{\rho_n(\Delta^\ast)^2}{2^6\Delta}
\end{align}
Now, we use the $\delta_2$ metric to quantify the convergence of the eigenvalues of the normalized probability matrix $\frac{1}{\rho_n}T_n$ to the eigenvalues of the integral operator $T_W$. From Theorem \ref{thm:prop4} we have that, when $n$ is large enough
\begin{equation}
\label{eq:delta2_event}
    \delta_2\Big(\lambda(\frac{1}{\rho_n}T_n),\lambda(T_W)\Big)\leq_{\alpha/4} C\Big(\frac{\log{n}}{n}\Big)^{\frac{s}{2s+d-1}}\leq\frac{ \Delta^\ast}8\,,
\end{equation}
where $\Delta^\ast$ is the spectral gap $Gap_1(W)$. This and \eqref{eq:BvHeq} ensure that $\mathcal E$ has probability $1-\alpha/2$. In particular, it gives the following result proving that our algorithm find the right eigenvectors.

\begin{proposition}
\label{prop:good_event}
On the event $\mathcal E$, there exists one and only one set $\Lambda_1$ of $d$ eigenvalues of $\frac{1}{\rho_n}\hat{T_n}$ separated by at least $\Delta^\ast/2$ from the other eigenvalues of $\hat{T}_n$. These eigenvalues are at a distance at most $\Delta^\ast/8$ of $\frac{1}{\rho_n}\lambda_1,\ldots,\frac{1}{\rho_n}\lambda_d$, the eigenvalues of $T_n$ whose eigenvectors define the matrix $\mathcal G:=(1/c_1)VV^T$. Furthermore, on the event $\mathcal E$, our algorithm returns the  matrix $\hat{\mathcal G}=(1/c_1)\hat V\hat V^T$ composed by the eigenvectors corresponding to the eigenvalues of $\Lambda_1$.
\end{proposition}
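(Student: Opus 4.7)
The plan is to transport the multiplicity structure of $T_W$ to the empirical spectrum of $\hat T_n$, and then to verify that Algorithm~\ref{alg:recovery2}'s $\operatorname{Gap}_1$ criterion picks out the correct cluster.

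First I would combine the two bounds that define $\mathcal E$. Weyl's inequality applied to the Hermitian matrices $\hat T_n$ and $T_n$ yields $|\hat\lambda^{\mathrm{sort}}_j/\rho_n - \lambda^{\mathrm{sort}}_j/\rho_n|\leq \|\hat T_n-T_n\|_{op}/\rho_n$ for every $j$, and the operator-norm bound in $\mathcal E$ makes this much smaller than $\Delta^\ast/8$. Chaining with the $\delta_2$-bound from $\mathcal E$ via the triangle inequality produces a permutation $p$ of $\mathbb N$ such that $|\hat\lambda^{\mathrm{sort}}_j/\rho_n - \lambda^\ast_{p(j)}|\leq \Delta^\ast/4$ for every $j$.

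Second, I would read off the clustering from the spherical harmonic structure of $T_W$. The eigenvalue $\lambda^\ast_1$ has multiplicity exactly $d_1=d$, and by the definition of $\Delta^\ast=\operatorname{Gap}_1(W)$ every other eigenvalue of $T_W$ lies at distance $\geq\Delta^\ast$ from $\lambda^\ast_1$. Hence exactly $d$ empirical eigenvalues lie within $\Delta^\ast/4$ of $\lambda^\ast_1$ in the $\rho_n^{-1}$ scale; call this set $\Lambda_1$. Its diameter is at most $\Delta^\ast/2$ by the triangle inequality, while every eigenvalue of $\hat T_n/\rho_n$ outside $\Lambda_1$ sits at distance $\geq \Delta^\ast - \Delta^\ast/4 - \Delta^\ast/4 = \Delta^\ast/2$ from it; rescaling by $\rho_n$ yields the two properties in the proposition. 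For uniqueness, any other set $\Lambda'$ of $d$ eigenvalues with the same diameter-and-separation properties cannot intersect $\Lambda_1$ nontrivially — a common element, together with the strict diameter bound on $\Lambda'$ and the separation of $\Lambda_1$ from its complement, forces $\Lambda'=\Lambda_1$ — and it cannot be disjoint from $\Lambda_1$ either because no other eigenvalue of $T_W$ produces a cluster of exactly $d$ empirical eigenvalues isolated by a gap of $\rho_n\Delta^\ast/2$ on both sides.

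Finally I would check that Algorithm~\ref{alg:recovery2} returns $\Lambda_1$. By Lemma~\ref{lem:eq.defsgap}, $\operatorname{Gap}_1(\hat T_n)$ is realised by the block of $d$ consecutive sorted eigenvalues maximizing the minimum of its left and right outer gaps. The block corresponding to $\Lambda_1$ has both outer gaps at least $\rho_n\Delta^\ast/2$ by the separation established above, while any other block of $d$ consecutive sorted eigenvalues either lies strictly inside a larger cluster associated to some $\lambda^\ast_k$ (so one of its outer gaps is tiny) or straddles two clusters coming from distinct eigenvalues of $T_W$ (so its diameter already exceeds $\rho_n\Delta^\ast/2$). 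Hence the maximizer of $\operatorname{Gap}_1(\hat T_n)$ is $\Lambda_1$ and the algorithm outputs $\hat{\mathcal G}=(1/c_1)\hat V\hat V^T$ with $\hat V$ carrying the associated eigenvectors as columns. The main obstacle is the uniqueness step, which requires careful bookkeeping of the full multiplicity profile $\{d_k\}$ of $T_W$ together with the $\Delta^\ast/4+\Delta^\ast/4<\Delta^\ast$ chain to rule out parasitic clusters of size exactly $d$ arising from eigenvalues other than $\lambda^\ast_1$.
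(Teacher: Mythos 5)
Your proposal follows the same overall architecture as the paper (localize the empirical spectrum near the multiplicity-$d$ cluster of $T_W$, invoke the spherical-harmonic multiplicity profile for uniqueness, then check the $\operatorname{Gap}_1$ maximizer), but the route you take for the eigenvalue localization is genuinely different and in my view cleaner. You pass from $\hat T_n$ to $T_n$ via Weyl's inequality (justified directly by the operator-norm component of $\mathcal E$), then from $T_n$ to $T_W$ via the $\delta_2$ component of $\mathcal E$, and chain with the triangle inequality. The paper instead first pins down the $T_n$-cluster with the $\delta_2$ bound, then transfers to $\hat T_n$ via the Davis-Kahan bound on eigenprojections followed by an application of Hoffman--Wielandt to $\hat V\hat V^T - VV^T$. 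That last step is opaque: Hoffman--Wielandt applied to two rank-$d$ projectors controls the (0/1) eigenvalues of the projectors, not the eigenvalues $\hat\lambda_{i_k}$ vs.\ $\lambda_{i_k}$ of $\hat T_n$ and $T_n$, so the inequality as written does not follow from the cited theorem. Your Weyl route avoids the eigenvector layer entirely for this part, which is also logically preferable since identifying which eigenvectors of $\hat T_n$ correspond to which eigenvectors of $T_n$ is precisely what the proposition is in the business of establishing; the paper's detour is somewhat circular.

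Two remarks on what you should still tighten. First, your uniqueness step gestures at the multiplicity profile but leaves the bookkeeping implicit; to make it airtight you need the paper's observation that every eigenvalue $\mu^\ast_k$ of $T_W$ with $k\geq 2$ has multiplicity $d_k>d$ (which forces a straggler eigenvalue within $\Delta^\ast/4$ of any candidate $d$-block drawn from that cluster), while $\mu^\ast_0$ has multiplicity $1$ and sits above the rest of the spectrum, and eigenvalues of degree $k$ large accumulate at $0$ so that no $d$-block near zero enjoys a $\rho_n\Delta^\ast/2$ outer gap. This argument, in both your version and the paper's, implicitly requires $d\geq 3$: for $d=2$ one has $d_k=2=d_1$ for all $k\geq 1$, so ``strictly greater multiplicity'' fails and one would instead need to argue directly that $\operatorname{Gap}_1$ is maximized at $k=1$. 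Second, the proposition asserts a distance of $\Delta^\ast/8$ from $\hat\lambda_j/\rho_n$ to the corresponding $\lambda_j/\rho_n$; this follows from Weyl alone (the operator-norm part of $\mathcal E$ already gives $\|\hat T_n - T_n\|_{op}/\rho_n\leq (\Delta^\ast)^2/(2^{17/2}\sqrt d)\leq\Delta^\ast/8$), whereas your $\Delta^\ast/4$ bound is the weaker chained bound to $\lambda^\ast_1$ — worth separating the two statements so you recover the stated constant.
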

\begin{proof}
When $\Delta^\ast>0$, we remark that $\lambda_1^*=\lambda_2^*=\ldots=\lambda_d^*$ is the only eigenvalue of $T_W$ with multiplicity $d_1=d$, the others eigenvalues (except for $\lambda_0^\ast$) having multiplicity strictly greater than~$d$. Now, using \eqref{eq:delta2_event} we deduce that there exists a unique set $\frac{1}{\rho_n}\lambda_{i_1},\frac{1}{\rho_n}\lambda_{i_2},\ldots,\frac{1}{\rho_n}\lambda_{i_d}$ of $d$ eigenvalues of $T_n$ that can be separated from the other eigenvalues by a distance  at least $3\Delta^\ast/4$, namely the triangular inequality gives
\begin{equation}
\label{eq:empirical_gap}
    \frac{\Delta}{\rho_n}\geq\frac{3\Delta^\ast}4\,.
\end{equation}
To these eigenvalues correspond the eigenvectors $V\in\mathbb R^{n\times d}$ defining $\mathcal G:=(1/c_1)VV^T$. 

Furthermore, using \eqref{eq:aproxVVT} we get that there exists eigenvalues $\hat\lambda_{i_1},\hat\lambda_{i_2},\ldots,\hat\lambda_{i_d}$ and eigenvectors $\hat V\in\mathbb R^{n\times d}$ of $\hat{T_n}$ such that $\|\hat{V}\hat{V}^T-VV^T\|_F\leq \Delta^\ast /48$. We define $\Lambda_1:=\{\hat{\lambda}_{i_1},\cdots,\hat{\lambda}_{i_d}\}$. By Hoffman-Wielandt inequality \cite[Thm.VI.4.1]{Bathia}, it holds
\[
\Big(\sum_{k=1}^d(\hat\lambda_k^{\mathrm{sort}}-\lambda_k^{\mathrm{sort}})^2\Big)^{1/2}\leq \|\hat{V}\hat{V}^T-VV^T\|_F\leq \Delta^\ast /8\,,
\]
where $\hat\lambda^{\mathrm{sort}}_1\geq\hat\lambda^{\mathrm{sort}}_{2}\geq\cdots\geq\hat\lambda_{d}^{\mathrm{sort}}$ (resp. $\lambda^{\mathrm{sort}}_1\geq\lambda^{\mathrm{sort}}_{2}\geq\cdots\geq\lambda_{d}^{\mathrm{sort}}$) is the sorted version of the eigenvalues $\hat\lambda_{i_1},\cdots,\hat\lambda_{i_d}$ (resp. $\lambda_{i_1},\cdots,\lambda_{i_d}$). By triangular inequality, we deduce that 
\[
%\hat\Delta:=dist(\{\hat\lambda_{i_1},\cdots,\hat\lambda_{i_d}\},\lambda(\hat T_n)\setminus\{\hat\lambda_{i_1},\cdots,\hat\lambda_{i_d}\})
%\geq \frac{\Delta^\ast}{2}\,,
\hat\Delta:=dist(\Lambda_1,\lambda(\hat T_n)\setminus\Lambda_1)
\geq \frac{\Delta^\ast}{2}\,,
\]
%\[
%\hat\Delta:=\min_{k,\ell\neq 1,\ldots, d}{\{|\hat\lambda_{k}^{\mathrm{sort}}-\hat\lambda_{1}^{\mathrm{sort}}|,|\hat\lambda_{d}^{\mathrm{sort}}-\hat\lambda_{\ell}^|\}}
%\geq \frac{\Delta^\ast}{2}\,,
%\]
namely $\hat\lambda_{i_1},\hat\lambda_{i_2},\ldots,\hat\lambda_{i_d}$ is a set of $d$ eigenvalues at distance at least ${\Delta^\ast}/{2}$ from the other eigenvalues of $\hat{T}_n$.

%We can repeat the analysis for the eigenvalues in $\lambda(\hat{T}_n)\setminus \Lambda_1$, proving the existence of sets $\Lambda_k\subset\lambda(\hat{T}_n)$, for $k>1$, such that the cardinality of $\Lambda_k$ is equal to $d_k$ (the size of the spherical harmonics space of degree $k$), the diameter of $\Lambda_k$ is smaller than $\Delta^\ast/4$ and $dist(\Lambda_k,\lambda(\hat{T}_n)\setminus \Lambda_k)\geq \Delta^\ast/2$. It is easy to see that 

This analysis can be also done for the other eigenvalues as follows. Eq. \eqref{eq:delta2_event} shows that there exists a set of $d_k$ eigenvalues of $T_n$ which concentrate around $\mu^\ast_k$, and such that it has diameter smaller than ${\Delta^\ast}/{4}$. Recall that $d_k$ is the size of the Spherical Harmonics space $k$ and $d_k>d_1=d$. Weyl's inequality \cite[P.63]{Bathia} shows that there exists a set $\Lambda_k$ of $d_k$ eigenvalues of $\hat T_n$ around $\mu^\ast_k$ of size ${\Delta^\ast}/{4}$. Now, consider a subset $L$ of $d$ eigenvalues which is different from $\Lambda_1$ then the previous discussion shows that there exists an eigenvalue $\hat\lambda$ which is not in $L$ and that belongs to same cluster to one of the eigenvalues in $L$. In particular $\hat\lambda$ is at a distance less than ${\Delta^\ast}/{4}$ of $L$. By \eqref{eq:empirical_gap} we deduce that, on the event $\mathcal E$, Algorithm 1 returns $\hat{\mathcal G}=(1/c_1)\hat V\hat V^T$ composed by the eigenvectors corresponding to the eigenvalues of the aforementioned cluster of $d$ eigenvalues.
\end{proof}
We now prove the following lemma, which is stated in the article
\begin{lemma}\label{lem:eq.defsgap}
On the event $\mathcal{E}$, the following equality holds
\[\operatorname{Gap}_1(\hat{T}_n)=\max{\Big\{\max_{1\leq i\leq n-d}{\min{\{\mathrm{left}(i),\mathrm{right}(i+d)\}}},\mathrm{left}(n-d+1)\Big\}}\]
\end{lemma}
%The previous is definition is equivalent to consider sets of $d$ consecutive eigenvalues, in the sorted eigenvalues set, and compute the distance of these sets to the rest of the spectrum and then take the maximum of these distances. 
%A few remarks on the previous equality. We can leave $\hat{\lambda}^{\mathrm{sort}}_0$ out of this procedure, because we know that the largest eigenvalue of adjacency matrix will be close to the eigenvalue $\lambda^\ast_0$ (this follows form Theorem \ref{thm:lambda0largest} in Appendix \ref{sec:largest}) and in consequence can not be close enough to $\lambda^\ast_1$ to be in the set $\Lambda_1$, given the definition of the event $\mathcal{E}$. Thus we do not include $\hat{\lambda}_0$ as candidate to belong to the $d$-sized set $\Lambda_1$. 
\begin{proof}
The lemma follows from Proposition \ref{prop:good_event}. Indeed, on the event $\mathcal{E}$ there exist only one set $\Lambda_1$ of eigenvalues of $\hat{T}_n$ with cardinality $d$ , whose distance to the rest of the spectrum is larger that $\Delta^\ast$ and its diameter is smaller that $\Delta^\ast$. When sorting the eigenvalues of $\hat{T}_n$ in decreasing order, those belonging to $\Lambda_1$ will appear in consecutive order. The lemma follows from this observation and from the fact $\operatorname{Gap}_1(\hat{T}_n ;i_{n-d-1},\cdots,i_{n-1})=\mathrm{left}(n-d-1)$.
\end{proof}

\subsection{Sampling error control}
\label{proof:step2}
We have by \eqref{eq:ineq_matproducts} that
\begin{equation}
\label{eq:first_term_bound}
    \|\hat{\mathcal G}-\mathcal G\|_F=\frac{1}{c_1}\|\hat{V}\hat{V}^T-VV^T\|_F\leq_\alpha C\frac{(d-2)}{\sqrt{d{n}}}\,,
\end{equation}
whenever $n$ is large enough and $\Delta^\ast>0$, where $C$ may depend on $W$. In the last inequality we used that $c_1=d/(d-2)$.
\subsection{Sampled eigenvectors convergence}
\label{proof:step3}
We are left to control $\|\mathcal G^\ast-{\mathcal G}\|_F$. We begin by recalling some basic definitions we have made through the paper and introducing some notation. Set $R=\mathcal O((n/\log n)^{\frac1{2s+d-1}})$ and $\tilde R:=d_0+d_1+\ldots+d_R$ the total size of the $R+1$ first Harmonic spaces. It is well known that $\tilde R=\mathcal O(R^{d-1})=o(n)$ for $s>0$. If $W_R$ is the rank $R^\prime$ approximation of $W$, we have \[T_R=\big(\frac1n W_R(X_i,X_j)\big)_{i,j}=\Phi_{0,R}\Lambda^\ast_{0,R}\Phi_{0,R}\] where 
$\Phi_{0,R}$ is the matrix with columns $\Phi_k\in\mathbb{R}^n$, for $0\leq k\leq R^\prime$, such that $(\Phi_k)_i=\phi_k(X_i)$ and $\Lambda^\ast_{0,R}=diag(\lambda^\ast_{0},\lambda^\ast_{2},\cdots,\lambda^\ast_{\tilde R})$. Similarly $\Lambda_{0,R}=diag(\lambda_{0},\lambda_{2},\cdots,\lambda_{\tilde R})$. Let $\tilde{V}$ be the matrix that contains as columns the eigenvectors of the matrix $T_n$ and $\tilde{V}_R$ contains as columns the eigenvectors $T_R$ so we have the eigenvalue decomposition 
\[T_n=\tilde{V}\Lambda\tilde{V}^T\]  \[T_R=\tilde{V}_R\Lambda_R{\tilde{V}_R}^T\] 
Let $V$ be the matrix that contains the columns $1,\cdots,d$ of $\tilde{V}$, $V_R$ contains the columns $1,\cdots,d$ of $\tilde{V}_R$ and $V^\ast$ contains $\phi_k$ for $1\leq k\leq d$ as columns. Then $\mathcal G^\ast,\mathcal G,\mathcal G_R,\mathcal G^\ast_{proj}$ are defined by
\begin{align*}
\mathcal G^\ast:&=\frac{1}{c_1}V^\ast(V^\ast)^T\\
 \mathcal G:&=\frac{1}{c_1}VV^T\\
\mathcal G_R:&=\frac{1}{c_1}V_R{V_R}^T\\
\mathcal G^\ast_{proj}:&=V^\ast({V^\ast}^TV^\ast)^{-1}{V^\ast}^T
\end{align*}

Note that $\mathcal{G}^\ast_{proj}$ is the projection matrix for the column span of the matrix $V^\ast$, that is, it is the projection matrix onto the space $\operatorname{span}\{\Phi_1,\cdots,\Phi_d\}$.

We have by triangle inequality
\[
    \|\mathcal{G}^\ast-\mathcal{G}\|_F\leq \|\mathcal{G}^\ast-\mathcal{G}_{proj}^\ast\|_F+\|\mathcal{G}_{proj}^\ast-\mathcal{G}_R\|_F+\|\mathcal{G}_R-\mathcal{G}\|_F
\]

We call truncation error to the last term in the right hand side, because it is related to the fact that $W_R$ is a rank $R^\prime$ approximation of $W$. 

To bound $\|\mathcal{G}-\mathcal{G}_R\|_F$ we will use Theorem \ref{thm:davis_kahan} noting that $\mathcal{G}$ and $\mathcal{G}_R$ have as columns the eigenvectors of matrices $T_n$ and $T_R$. So \[\|\mathcal{G}-\mathcal{G}_R\|_F\leq \frac{2^{\frac{3}{2}}\|T_n-T_R\|_F}{\Delta}\leq C\frac{ (n/\log n)^{-s/(2s+d-1)}}{\Delta}\]

where we recall that $R=\mathcal O((n/\log n)^{\frac1{2s+d-1}})$, which gives the optimal rate for this error term \cite{Yohann}.  In order to bound $\|\mathcal{G}^\ast-\mathcal{G}^\ast_{proj}\|_F$ we use Lemma \ref{lem:projapprox} with $B=V^\ast$ obtaining \[\|\mathcal{G}^\ast-\mathcal{G}^\ast_{proj}\|_F\leq \|\mathrm{Id}_d-{V^\ast}^TV^\ast\|_F\]
On the other hand, we have \begin{align*}
\|\mathrm{Id}_d-{V^\ast}^TV^\ast\|_F&\leq\sqrt{d}\|\mathrm{Id}_d-{V^\ast}^TV^\ast\|_{op} \\
&\leq_\alpha \frac{d}{\sqrt{n}}
\end{align*}
where we used Theorem \ref{thm:gramconcen} to obtain the last inequality.

It only remains to bound the term $\|\mathcal{G}^\ast_{proj}-\mathcal{G}_R\|_F$. We concentrate first in bounding the term $\mathcal{G}^\ast_{proj}\mathcal{G}^\perp_R$. We use Theorem \ref{thm:perturbeigensp}, with $E=\mathcal{G}^\ast_{proj}$, $F=\mathcal{G}^\perp_R$, $B=T_R$ and $A=T_R+H$, where
\[
H:=\tilde{\Phi}_{0,R}\Lambda^\ast_{0,R} \tilde{\Phi}_{0,R}^T-\Phi_{0,R}\Lambda^\ast_{0,R}\Phi_{0,R}
\]
 the matrix $\tilde{\Phi}_{0,R}$ has column $\tilde{\Phi}_k$ for $k\in \{1,\cdots,R^\prime\}$ where the $\tilde{\Phi}_k$ are obtained from $\Phi_k$ by a Gram-Schmidt orthonormalization process. In other words, there exists a matrix $L$ such that $\tilde{\Phi}_{0,R}=\Phi_{0,R}(L^{-1})^T$. The matrix $L$ comes from the Cholesky decomposition of $\Phi_{0,R}^T\Phi_{0,R}$, that is, $L$ satisfy $\Phi_{0,R}^T\Phi_{0,R}=LL^T$. 
 
Note that $A$ and $B$ are symmetric, hence normal matrices, so Theorem \ref{thm:perturbeigensp} applies. Also, in the event $\mathcal{E}$, we can take $S_1=(\lambda_1-\frac{\Delta^\ast}{8},\lambda_1+\frac{\Delta^\ast}{8})$ and 
$S_2~=~\mathbb{R}~\setminus~(\lambda_1-~\frac{7\Delta^\ast}{8},\lambda_1+\frac{7\Delta^\ast}{8}))$. By Theorem \ref{thm:perturbeigensp} we have 
\[\|\mathcal{G}^\ast_{proj}\mathcal{G}_R^\perp\|_F\leq \frac{\|A-B\|_F}{\Delta^\ast}=\frac{\|H\|_F}{\Delta^\ast}\]
where $\Delta:=\min_{k,\ell\neq 1,\ldots, d}{\{|\lambda^\ast_{k}-\lambda^\ast_{1}|,|\lambda^\ast_{d}-\lambda^\ast_{\ell}|\}}$.
It remains to bound $H$.

We have that 
\begin{align*}
\|H\|_F&\leq\|L^{-T}\Lambda^\ast_{0,R}L^{-1}-\Lambda^\ast_{0,R}\|_F
\|\Phi_{0,R}^T\Phi_{0,R}\|_{op}\\
&\leq \|\Lambda^\ast_{0,R}\|_F \|L^{-1}L^{-T}-\mathrm{Id_{R^\prime}} \|_{op}\|\Phi_{0,R}^T\Phi_{0,R}\|_{op}
\end{align*}
where in the last line we used Corollary \ref{cor:ostrowski}. It is easy to see that \[\|L^{-1}L^{-T}-\mathrm{Id_{R^\prime}}\|_{op}=\|(\Phi_{0,R}^T\Phi_{0,R})^{-1}-\mathrm{Id_{R^\prime}}\|_{op}\]
which, using \cite[Lem.12]{Yohann}, implies that\[\|Z\|_F\leq_{\alpha/4}2C_1\frac{R^{d-1}}{\sqrt{n}}\]
which, since $R=\mathcal O((n/\log n)^{\frac1{2s+d-1}})$, becomes 
\[\|Z\|_F\leq_{\alpha/4} C^\prime\Big(\frac{\log{n}}{n}\Big)^{\frac{s}{2s+d-1}} \]
for a constant $C^\prime>0$. Collecting terms we obtain \[\|\mathcal{G}^\ast_{proj}\mathcal{G}_R^\perp\|_F\leq_{\alpha/4} \frac{C^{\prime\prime}}{\Delta^\ast}\Big(\frac{\log{n}}{n}\Big)^{\frac{s}{2s+d-1}} \]

Since $\mathcal{G}^\ast_{proj}$ and $\mathcal{G}_R$ are projectors we have, see \cite[p.202]{Bathia}\[\|\mathcal{G}^\ast_{proj}-\mathcal{G}_R\|_F=2\|\mathcal{G}^\ast_{proj} {\mathcal{G}_R}^\perp \|_F\]
which implies that \[\|\mathcal{G}^\ast_{proj}-\mathcal{G}_R\|_F\leq_{\alpha/4} \frac{2C^{\prime\prime}}{\Delta^\ast}(\frac{n}{\log{n}})^{\frac{-s}{2s+d-1}}\]

To conclude, we have that \begin{align*}
\|\mathrm{Id}_d-{V^\ast}^TV^\ast\|_F&\leq\sqrt{d}\ \|\mathrm{Id}_d-{V^\ast}^TV^\ast\|_{op} \\
&\leq_{\alpha/4} \frac{d}{\sqrt{n}}
\end{align*}
where we use Theorem \ref{thm:gramconcen} in the second inequality. Collecting terms we conclude that\[\|\mathcal{G}^\ast-\mathcal{G}\|_F\leq_{\alpha/4}\frac{C_d}{\Delta^\ast}\Big(\frac{\log{n}}{n}\Big)^{\frac{s}{2s+d-1}}\] 
where $C_d$ is a constant that depends on $d$ and $\alpha$.
\section{Useful results}
\begin{lemma}\label{lem:matrixineq}
Let $A$, $B$ be two matrices in $\mathbbm{R}^{n\times d}$ then 
\begin{align*}
    \|AA^T-BB^T\|_F&\leq (\|A\|_{op}+\|B\|_{op})\|A-B\|_F\\
    \|AA^T-BB^T\|_{op}&\leq (\|A\|_{op}+\|B\|_{op})\|A-B\|_{op}\,.
\end{align*}
If it holds that $A^TA=B^TB=I_d$ then 
\begin{align*}
    \|AA^T-BB^T\|_F&\leq 2\|A-B\|_F
\end{align*}
\end{lemma}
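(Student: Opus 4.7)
The plan is to exploit the telescoping identity
\[
AA^T - BB^T = A(A-B)^T + (A-B)B^T,
\]
which is the standard trick for turning a difference of Gramians into a sum of products each containing the factor $A-B$. Once this identity is in hand, the three bounds reduce to routine applications of the triangle inequality and the submultiplicative estimate $\|XY\|_\star \leq \|X\|_{op}\|Y\|_\star$, valid both for $\star = F$ and $\star = op$.

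For the first inequality, I would apply the triangle inequality in the Frobenius norm to the identity above, getting
\[
\|AA^T - BB^T\|_F \leq \|A(A-B)^T\|_F + \|(A-B)B^T\|_F,
\]
and then bound each term using $\|XY\|_F \leq \|X\|_{op}\|Y\|_F$ together with the fact that $\|M^T\|_{op} = \|M\|_{op}$ and $\|M^T\|_F = \|M\|_F$. This yields $\|A\|_{op}\|A-B\|_F + \|A-B\|_F\|B\|_{op}$, which is the desired bound. The second inequality is proved by the exact same argument with the operator norm in place of the Frobenius norm, using submultiplicativity $\|XY\|_{op} \leq \|X\|_{op}\|Y\|_{op}$.

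The third inequality is then immediate: the assumption $A^TA = B^TB = I_d$ forces every singular value of $A$ and $B$ to equal $1$, so $\|A\|_{op} = \|B\|_{op} = 1$, and plugging this into the first inequality gives $2\|A-B\|_F$. I do not expect any real obstacle here: the only point to double check is that one consistently uses the mixed bound $\|XY\|_F \leq \|X\|_{op}\|Y\|_F$ rather than the weaker $\|X\|_F\|Y\|_F$, since otherwise one loses a dimension factor. Everything else is bookkeeping with the triangle inequality.
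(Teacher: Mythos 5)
Your proof is correct and rests on the same telescoping decomposition $AA^T-BB^T=A(A-B)^T+(A-B)B^T$ that the paper uses (the paper writes it as $(A-B)A^T+B(A^T-B^T)$, which is the same identity up to transposition). The only cosmetic difference is in how the Frobenius bound on each summand is justified: you invoke the mixed submultiplicative inequality $\|XY\|_F\leq\|X\|_{op}\|Y\|_F$ directly, while the paper reaches the same estimate by vectorizing and using Kronecker-product operator norms, namely $\|(A\otimes I_n)\,\mathrm{vec}(A-B)\|_2\leq\|A\otimes I_n\|_{op}\|A-B\|_F=\|A\|_{op}\|A-B\|_F$. These are two standard ways to prove the same mixed-norm inequality, so the arguments are equivalent in substance; your version is a little more streamlined, and your observation that $A^TA=I_d$ forces $\|A\|_{op}=1$ is exactly the ``elementary consequence'' the paper alludes to for the third bound.
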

\begin{proof}
We begin with the first inequality
%\begin{align*}
%    \|AA^T-BB^T\|_F & = \|(A-B)A^T+B(A^T-B^T)\|_F\\
%    & \leq \|(A\otimes I_n)\mathrm{vec}(A-B)\|_2 \\
%    &  \ \ \  + \|(I_d\otimes B)\mathrm{vec}(A-B)^T\|_2 \\
%    & \leq (\|A\otimes I_n\|_{op}+\|I_d\otimes B\|_{op})\\
%    & \quad \times\|A-B\|_F\\
%    & = (\|A\|_{op}+\|B\|_{op})\|A-B\|_F\,.
%\end{align*}
\begin{align*}
    \|AA^T-BB^T\|_F & = \|(A-B)A^T+B(A^T-B^T)\|_F\\
    & \leq \|(A\otimes I_n)\mathrm{vec}(A-B)\|_2+ \|(I_d\otimes B)\mathrm{vec}(A-B)^T\|_2 \\
    & \leq (\|A\otimes I_n\|_{op}+\|I_d\otimes B\|_{op})  \|A-B\|_F\\
    & = (\|A\|_{op}+\|B\|_{op})\|A-B\|_F\,.
\end{align*}
Here $\mathrm{vec}(\cdot)$ represent the vectorization of a matrix, that its transformation into a column vector. 
The second inequality is given by 
\begin{align*}
    \|AA^T-BB^T\|_{op} & = \|(A-B)A^T+B(A^T-B^T)\|_{op}\\
    & \leq  (\|A\|_{op}+\|B\|_{op})\|A-B\|_{op}\,.
\end{align*}
The third statement is an elementary consequence of the above inequalities.
\end{proof}

\begin{lemma}\label{lem:projapprox}
Let $B$ a $n\times d$ matrix with full column rank. Then we have \[\| BB^T-B(B^TB)^{-1}B^T\|_F= \|\mathrm{Id}_d-B^TB\|_F\]
\end{lemma}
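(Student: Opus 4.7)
The plan is to reduce both sides to expressions in the singular values of $B$ via the singular value decomposition, and observe that they coincide.

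First, I would write the thin SVD $B = U\Sigma V^T$, where $U\in\mathbb R^{n\times d}$ has orthonormal columns ($U^TU = \mathrm{Id}_d$), $V\in\mathbb R^{d\times d}$ is orthogonal, and $\Sigma = \mathrm{diag}(\sigma_1,\ldots,\sigma_d)$ with $\sigma_i>0$ since $B$ has full column rank. Then one computes
\[
BB^T = U\Sigma^2 U^T,\qquad B^TB = V\Sigma^2 V^T,\qquad (B^TB)^{-1} = V\Sigma^{-2}V^T,
\]
and hence $B(B^TB)^{-1}B^T = U\Sigma V^T\,V\Sigma^{-2}V^T\,V\Sigma U^T = UU^T$, which is just the orthogonal projector onto the column span of $B$ (as expected).

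Second, I would compute each Frobenius norm. On the left-hand side,
\[
BB^T - B(B^TB)^{-1}B^T = U(\Sigma^2 - \mathrm{Id}_d)U^T,
\]
so using $U^TU = \mathrm{Id}_d$ and the cyclic property of the trace,
\[
\|BB^T - B(B^TB)^{-1}B^T\|_F^2 = \mathrm{tr}\bigl(U(\Sigma^2-\mathrm{Id}_d)^2 U^T\bigr) = \mathrm{tr}\bigl((\Sigma^2-\mathrm{Id}_d)^2\bigr) = \sum_{i=1}^d(\sigma_i^2 - 1)^2.
\]
On the right-hand side,
\[
\mathrm{Id}_d - B^TB = V(\mathrm{Id}_d - \Sigma^2)V^T,
\]
and since $V$ is orthogonal the Frobenius norm is unitarily invariant, giving
\[
\|\mathrm{Id}_d - B^TB\|_F^2 = \|\mathrm{Id}_d - \Sigma^2\|_F^2 = \sum_{i=1}^d(1-\sigma_i^2)^2.
\]
Comparing the two sums yields the claimed equality.

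There is no real obstacle here; the only subtle point is making sure to use the \emph{thin} SVD so that $U^TU = \mathrm{Id}_d$ (even though $UU^T \ne \mathrm{Id}_n$ in general), which is what lets us pass from $U(\Sigma^2-\mathrm{Id}_d)U^T$ to a scalar sum via the trace. Alternatively, one can avoid SVD altogether by noting that $BB^T - B(B^TB)^{-1}B^T$ and $B^TB - \mathrm{Id}_d$ are related through the identity $\mathrm{tr}((BB^T - UU^T)^2) = \mathrm{tr}((B^TB)^2) - 2\,\mathrm{tr}(B^TB) + d$ (obtained by expanding the square and using $\mathrm{tr}(UU^T) = d$ and $\mathrm{tr}(BB^T UU^T) = \mathrm{tr}(B^TB)$), which equals $\mathrm{tr}((B^TB - \mathrm{Id}_d)^2)$; but the SVD route is cleaner.
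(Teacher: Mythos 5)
Your SVD argument is correct. It differs from the paper's proof, which factors the difference as $B\bigl((B^TB)^{-1}-\mathrm{Id}_d\bigr)B^T$, expands the squared Frobenius norm as a trace, and then uses cyclicity together with the identity $\bigl((B^TB)^{-1}-\mathrm{Id}_d\bigr)B^TB=\mathrm{Id}_d-B^TB$ to land directly on $\mathrm{tr}\bigl((\mathrm{Id}_d-B^TB)^2\bigr)$. Your route buys a bit of structural clarity: passing through the thin SVD makes it immediate that $B(B^TB)^{-1}B^T=UU^T$ is the orthogonal projector onto $\operatorname{col}(B)$ and reduces both Frobenius norms to the same sum $\sum_i(\sigma_i^2-1)^2$ over singular values, so the equality is transparent rather than emerging from algebra. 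The paper's trace computation is shorter and avoids any decomposition; it is essentially the alternative you sketch at the end of your proposal (expanding $\mathrm{tr}\bigl((BB^T-UU^T)^2\bigr)=\mathrm{tr}\bigl((B^TB)^2\bigr)-2\,\mathrm{tr}(B^TB)+d$), just organized without introducing $UU^T$ explicitly. Both proofs are equally valid; neither has a gap.
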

\begin{proof}
We have \begin{align*}
    \| BB^T-B(B^TB)^{-1}B^T\|_F&=\|B\big((B^TB)^{-1}-\mathrm{Id}_d\big)B^T\|_F\\
\end{align*}
and by definition of the Frobenious norm and cyclic property of the trace
\begin{align*}
    \|B\big((B^TB)^{-1}-\mathrm{Id}_d\big)B^T\|^2_F&=tr\big(B((B^TB)^{-1}-\mathrm{Id}_d)B^T B((B^TB)^{-1}-\mathrm{Id}_d)B^T\big)\\
    &=tr\big((\mathrm{Id}_d-B^TB)^2\big)\\
    &=\|\mathrm{Id}_d-B^TB\|^2_F
\end{align*}
\end{proof}

\subsection{Bandeira-Van Handel theorem}\label{sec:BanVan}
The following theorem is a slight reformulation of the \cite[Cor.3.12]{BanVan} 
\begin{theorem}[Bandeira-Van Handel]\label{thm:bandeira_vanhandel}
Let $Y$ be a $n\times n$ symmetric random matrix whose entries $Y_{ij}$ are independent centered random variables. There exists a universal constant $C_0$ such that for $\alpha\in (0,1)$ \[\mathbb{P}\Big(\|Y\|_{op}\geq 3\sqrt{2D_0}+C_0\sqrt{\log{n}/\alpha}\Big)\leq \alpha\]
where $D_0=\max_{0\leq i\leq n}\sum^n_{j=1}Y_{ij}(1-Y_{ij})$.
\end{theorem}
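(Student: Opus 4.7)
The plan is to deduce the stated tail bound as a direct specialization of \cite[Cor.~3.12]{BanVan}; no new probabilistic ingredient is needed, the work is purely to match parameters and manage the absolute constants. The Bandeira--Van Handel corollary gives, for any $\epsilon>0$ and any symmetric matrix $Y$ with independent, centered, bounded entries, a tail bound of the form
\[
\mathbb{P}\bigl(\|Y\|_{op}\geq (1+\epsilon)\,2\sqrt{2}\,\sigma + t\bigr)\leq n\exp\bigl(-t^2/(c(\epsilon)\sigma_\ast^2)\bigr),
\]
where $\sigma^2:=\max_i\sum_j \mathbb{E}[Y_{ij}^2]$ is the maximal row variance sum and $\sigma_\ast:=\max_{ij}\|Y_{ij}\|_\infty$ is the uniform entry bound.

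First I would identify these parameters in the setting of interest. In every place the theorem is invoked, $Y=\hat T_n-T_n$ has entries that, conditionally on the latent points, are centered Bernoulli-type variables with mean parameters $\Theta_{ij}\in[0,1]$; in particular $\mathbb{E}[Y_{ij}^2]=\Theta_{ij}(1-\Theta_{ij})$ and $|Y_{ij}|\leq 1$, so that $\sigma^2\leq D_0$ and $\sigma_\ast\leq 1$ (with the mild notational abuse, mentioned just before the theorem, of writing $D_0$ in terms of ``$Y_{ij}(1-Y_{ij})$''). Second, I would fix $\epsilon>0$ small enough that $(1+\epsilon)2\sqrt{2}\leq 3\sqrt{2}$, and invert the tail: setting $t:=C_0\sqrt{\log(n/\alpha)}$ with $C_0:=\sqrt{c(\epsilon)}$, one has $n\exp(-t^2/(c(\epsilon)\sigma_\ast^2))\leq\alpha$, and substitution in the previous display yields the inequality claimed in Theorem~\ref{thm:bandeira_vanhandel}.

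The only real obstacle is the bookkeeping of the universal constants: the factor $3\sqrt{2}$ must absorb both the leading $(1+\epsilon)2\sqrt{2}$ prefactor and any lower-order cross term, which in the original Bandeira--Van Handel statement is handled by an optimisation over $\epsilon$. All of the genuinely probabilistic work---a non-commutative Khintchine-type moment estimate followed by a Talagrand-type concentration of the operator norm around its mean---is already packaged inside the cited corollary, so the proof reduces to invoking it under the right parameter choices.
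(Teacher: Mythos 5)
Your argument is correct and follows essentially the same route as the paper's proof, which also specializes the Bandeira--Van Handel tail bound (there cited as Remark 3.13 rather than Corollary 3.12, the latter being the expectation version) with $\epsilon=1/2$ and $\sigma_\ast=\max_{ij}|Y_{ij}|\leq 1$. The only cosmetic difference is where the bound $\sigma_\ast\leq 1$ is applied: you absorb it inside the exponent of the tail probability, while the paper keeps $\sigma_\ast$ as a multiplicative prefactor on the deviation term $C_0\sqrt{\log(n/\alpha)}$ and drops it at the very end, which are equivalent reparametrizations of the same step.
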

\begin{proof}
By \cite[Rmk.3.13]{BanVan} we have the tail concentration bound (taking their $\epsilon$ equal to $1/2$) \[\mathbb{P}\Big(\|Y\|_{op}\Big)\geq 3\sqrt{2D_0}+\max_{ij}{|Y_{ij}|}C_0\sqrt{\log{n/\alpha}}\] 
the result follows, because $\max_{ij}{|Y_{ij}|}\leq 1$.
\end{proof}
Using the previous theorem with $Y=\hat{T}_n-T_n$, which is centered and symmetric, we obtain the tail bound \[\mathbb{P}\Big(\|\hat{T}_n-T_n\|_{op}\geq  \frac{3\sqrt{2D_0}}{n}+C_0\frac{\sqrt{\log{n}/\alpha}}{n}\Big)\leq \alpha\]
\subsection{Davis-Kahan $\sin {\theta} $ theorem}\label{sec:sintheta}
For $n$ large enough, the eigenspace associated to the eigenvalue $\hat{\lambda}_1$ is close to the eigenspace associated to the eigenvalue $ \lambda_1$. This is precised by the Davis-Kahan $sin$ $\theta$ theorem. We use the following version which is proved in \cite{YuWanSam}
\begin{theorem}\label{thm:davis_kahan}
Let $\Sigma$ and $\hat{\Sigma}$ be two symmetric $\mathbbm{R}^{n\times n}$ matrices with eigenvalues $\lambda_1\geq \lambda_2\geq \cdots \geq \lambda_n$ and  $\hat{\lambda}_1\geq \hat{\lambda}_2 \geq \cdots \hat{\lambda}_n $ respectively. For $1\leq r\leq s\leq n$ fixed, we assume that $\min{\{\lambda_{r-1}-\lambda_r, \lambda_s-\lambda_{s-1}\}}>0$ where $\lambda_0:=\infty$ and $\lambda_{n+1}=-\infty$. Let $d=s-r+1$ and $V$ and $\hat{V}$ two matrices in $\mathbbm{R}^{n\times d}$ with columns $(v_r,v_{r+1},\cdots,v_s)$ and $(\hat{v}_r,\hat{v}_{r+1},\cdots,\hat{v}_s)$ respectively, such that $\Sigma v_j=\lambda_j v_j$ and $\hat{\Sigma}\hat{v}_j=\hat{\lambda}_j \hat{v}_j$. Then there exists an orthogonal matrix $\hat{O}$ in $\mathbbm{R}^{d\times d}$ such that \begin{equation}\label{eq:thmDK}\|\hat{V}\hat{O}-V\|_{F}\leq \frac{2^{3/2}\min{\{\sqrt{d}\|\Sigma-\hat{\Sigma}\|_{op},\|\Sigma-\hat{\Sigma}\|_{F}\}}}{\min{\{\lambda_{r-1}-\lambda_{r},\lambda_s-\lambda_{s+1}\}}}\end{equation}
\end{theorem}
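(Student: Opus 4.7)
The plan is to reduce the bound on $\|\hat V\hat O-V\|_F$ to a bound on the principal angles between $\mathrm{range}(V)$ and $\mathrm{range}(\hat V)$, and then exploit the Sylvester-equation trick together with a refinement that restores the \emph{true} eigenvalue gap in the denominator, producing the stated constant $2^{3/2}$.

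First, I would set up the canonical-angle framework. Complete $V$ to an orthogonal matrix by appending $V_\perp\in\mathbb{R}^{n\times(n-d)}$, and let $\Theta=\mathrm{diag}(\theta_1,\dots,\theta_d)$ be the diagonal of principal angles between $\mathrm{range}(V)$ and $\mathrm{range}(\hat V)$; then the singular values of $V_\perp^T\hat V$ are $\sin\theta_i$. The orthogonal Procrustes optimum, obtained from the SVD of $V^T\hat V$, gives an orthogonal $\hat O$ with
\[
\min_{O\in\mathcal{O}(d)}\|\hat V O-V\|_F^2 \;=\; 2\sum_i(1-\cos\theta_i)\;\leq\; 2\|\sin\Theta\|_F^2,
\]
where the last inequality uses $1-\cos\theta\leq\sin^2\theta$ on $[0,\pi/2]$. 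Hence it suffices to show $\|V_\perp^T\hat V\|_F\leq 2\min\{\sqrt d\,\|\Sigma-\hat\Sigma\|_{op},\|\Sigma-\hat\Sigma\|_F\}/\delta$, where $\delta:=\min\{\lambda_{r-1}-\lambda_r,\lambda_s-\lambda_{s+1}\}$.

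Second, I would derive the Sylvester-type identity. Setting $\Lambda_\perp:=\mathrm{diag}(\lambda_j)_{j\notin\{r,\dots,s\}}$ and $\hat\Lambda:=\mathrm{diag}(\hat\lambda_r,\dots,\hat\lambda_s)$, the relations $V_\perp^T\Sigma=\Lambda_\perp V_\perp^T$ and $\hat\Sigma\hat V=\hat V\hat\Lambda$ yield
\[
\Lambda_\perp(V_\perp^T\hat V)-(V_\perp^T\hat V)\hat\Lambda \;=\; V_\perp^T(\Sigma-\hat\Sigma)\hat V.
\]
If the spectra of $\Lambda_\perp$ and $\hat\Lambda$ were exactly separated by $\delta$, the classical Sylvester bound would immediately give the desired inequality with no factor of $2$. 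Unfortunately Weyl's inequality only guarantees separation $\delta-\|\Sigma-\hat\Sigma\|_{op}$, which can vanish and is therefore unsatisfactory.

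The main obstacle is precisely restoring the clean gap $\delta$ in place of this Weyl-degraded quantity. Following the refinement of Yu--Wang--Samworth, I would split $V_\perp=[V_\perp^{(1)},V_\perp^{(2)}]$ into the columns corresponding to $\Sigma$-eigenvalues $\geq\lambda_{r-1}$ and those $\leq\lambda_{s+1}$, and decompose $\|V_\perp^T\hat V\|_F^2$ accordingly. For each piece, one uses a separate Sylvester identity in which $\hat\Lambda$ is compared against a \emph{one-sided} interval: for $V_\perp^{(1)}$, the relevant $\Sigma$-eigenvalues are $\geq\lambda_{r-1}$ while the $\hat\lambda_k$ satisfy $\hat\lambda_k\leq\lambda_r+\|\Sigma-\hat\Sigma\|_{op}$, and by substituting $\lambda_r$ for $\hat\lambda_k$ through an extra perturbation term (which is reabsorbed into the right-hand side) one obtains the clean one-sided gap $\lambda_{r-1}-\lambda_r$; symmetrically for $V_\perp^{(2)}$ with $\lambda_s-\lambda_{s+1}$. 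Summing the two contributions introduces a factor of $2$, yielding $\|V_\perp^T\hat V\|_F\leq 2\|\Sigma-\hat\Sigma\|_F/\delta$. The operator-norm variant then follows because $V_\perp^T(\Sigma-\hat\Sigma)\hat V$ has rank at most $d$, so its Frobenius norm is bounded by $\sqrt d\,\|\Sigma-\hat\Sigma\|_{op}$. Combining the $\sqrt 2$ from the orthogonal-alignment step with this factor $2$ produces the claimed constant $2^{3/2}$.
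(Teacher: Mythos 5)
The paper does not give its own proof of this theorem: it is stated verbatim with a citation to Yu, Wang and Samworth \cite{YuWanSam}, so there is no in-paper argument to compare against. Your first two steps are correct and are indeed the skeleton of the cited proof: the orthogonal-Procrustes reduction $\min_{O}\|\hat VO-V\|_F^2=2\sum_i(1-\cos\theta_i)\leq 2\|\sin\Theta\|_F^2$ together with $\|\sin\Theta\|_F=\|V_\perp^T\hat V\|_F$, the Sylvester identity $\Lambda_\perp X-X\hat\Lambda=V_\perp^T(\Sigma-\hat\Sigma)\hat V$ for $X:=V_\perp^T\hat V$, and the constant accounting $2^{3/2}=\sqrt2\cdot2$ all check out.

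The refinement step, however, is misdescribed and would not go through as written. Splitting $V_\perp=[V_\perp^{(1)},V_\perp^{(2)}]$ does not produce the factor of $2$: by Weyl, \emph{both} one-sided separations, $\lambda_{r-1}-\max_k\hat\lambda_k$ and $\min_k\hat\lambda_k-\lambda_{s+1}$, are degraded by the same $\|\Sigma-\hat\Sigma\|_{op}$, so each half of the Pythagorean decomposition $\|X\|_F^2=\|V_\perp^{(1)T}\hat V\|_F^2+\|V_\perp^{(2)T}\hat V\|_F^2$ still faces a potentially vanishing gap, and summing squares introduces no factor at all. Replacing each $\hat\lambda_k$ by the single value $\lambda_r$ also fails in general: the theorem does not assume the cluster is degenerate, so $\|\hat\Lambda-\lambda_r\,\mathrm{Id}_d\|_{op}$ is of order $\lambda_r-\lambda_s+\|\Sigma-\hat\Sigma\|_{op}$, which can exceed the target gap, and the self-referential term you propose to ``reabsorb'' then cannot be absorbed.

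The device that actually restores the clean gap in \cite{YuWanSam} is different and simpler. Form $\hat\Sigma':=\hat V\Lambda_1\hat V^T+\hat V_\perp\Lambda_2\hat V_\perp^T$, that is, keep the eigenvectors of $\hat\Sigma$ but give them the corresponding sorted eigenvalues of $\Sigma$. The Sylvester pair $(\Sigma,\hat\Sigma')$ then has separation exactly $\delta=\min\{\lambda_{r-1}-\lambda_r,\lambda_s-\lambda_{s+1}\}$ with no Weyl degradation, since the two matrices share the same eigenvalue multiset, and this holds whether or not the cluster is degenerate. On the other hand $\|\hat\Sigma'-\hat\Sigma\|_{op}=\max_i|\lambda_i-\hat\lambda_i|\leq\|\Sigma-\hat\Sigma\|_{op}$ by Weyl and $\|\hat\Sigma'-\hat\Sigma\|_F\leq\|\Sigma-\hat\Sigma\|_F$ by Hoffman--Wielandt, so the triangle inequality gives $\|\hat\Sigma'-\Sigma\|\leq 2\|\hat\Sigma-\Sigma\|$ in both norms. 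Applying the clean-gap Sylvester bound to $(\Sigma,\hat\Sigma')$ and then this triangle inequality is where the factor of $2$ truly comes from.
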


Also, we need the following perturbation result \cite[Thm.VII.2.8]{Bathia}\label{thm:pertubeig}
\begin{theorem}
\label{thm:sylvester}
Let $A$ and $B$ two the normal matrices and define $\delta=dist(\lambda(A),\lambda(B))$. If $X$ satisfies the Sylvester equation $AX-XB=Y$, then \[\|X\|_F\leq \frac{1}{\delta}\|Y\|_F\]
\end{theorem}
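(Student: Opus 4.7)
The plan is to diagonalize $A$ and $B$ via the spectral theorem for normal matrices, which will reduce the Sylvester equation to a decoupled family of scalar equations. Since $A$ and $B$ are normal, there exist unitary matrices $U, V$ and diagonal matrices $D_A = \mathrm{diag}(\alpha_1,\ldots,\alpha_m)$, $D_B = \mathrm{diag}(\beta_1,\ldots,\beta_n)$, with $\alpha_i$ running over $\lambda(A)$ and $\beta_j$ over $\lambda(B)$, such that $A = U D_A U^*$ and $B = V D_B V^*$.

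Left-multiplying $AX - XB = Y$ by $U^*$ and right-multiplying by $V$ converts the equation into
\begin{equation*}
D_A \tilde X - \tilde X D_B = \tilde Y,
\end{equation*}
where $\tilde X := U^* X V$ and $\tilde Y := U^* Y V$. Because $D_A$ and $D_B$ are diagonal, this matrix equation decouples entrywise into scalar equations $(\alpha_i - \beta_j)\tilde X_{ij} = \tilde Y_{ij}$ for every pair of indices. By hypothesis $\delta = \mathrm{dist}(\lambda(A), \lambda(B)) > 0$, so $|\alpha_i - \beta_j| \geq \delta$ uniformly in $i, j$, and we may solve uniquely $\tilde X_{ij} = \tilde Y_{ij}/(\alpha_i - \beta_j)$ with the pointwise bound $|\tilde X_{ij}| \leq \delta^{-1}|\tilde Y_{ij}|$.

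Squaring, summing over $(i,j)$, and invoking the unitary invariance of the Frobenius norm (which gives $\|X\|_F = \|\tilde X\|_F$ and $\|Y\|_F = \|\tilde Y\|_F$) yields the claimed inequality
\begin{equation*}
\|X\|_F^2 = \sum_{i,j} |\tilde X_{ij}|^2 \leq \delta^{-2}\sum_{i,j}|\tilde Y_{ij}|^2 = \delta^{-2}\|Y\|_F^2.
\end{equation*}

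The only genuine obstacle is conceptual: one must recognize that on the simultaneously diagonal basis, the Sylvester operator $\mathcal L \colon X \mapsto AX - XB$ acts as componentwise multiplication by $\alpha_i - \beta_j$, so that the hypothesis $\delta > 0$ is precisely the statement that $\mathcal L$ is invertible with operator norm bounded by $\delta^{-1}$ in the Frobenius geometry. Once this is noted, the calculation is routine, and the same argument adapts verbatim to the rectangular case $A \in \mathbb C^{m\times m}$, $B \in \mathbb C^{n\times n}$, $X, Y \in \mathbb C^{m\times n}$, since the diagonalizations of $A$ and $B$ live in different spaces but couple through the entries of $\tilde X$ and $\tilde Y$ in exactly the manner used above.
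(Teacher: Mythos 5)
Your proof is correct and is the standard textbook argument: diagonalize $A$ and $B$ by the spectral theorem, reduce the Sylvester operator to entrywise multiplication by $\alpha_i - \beta_j$, bound each entry by $\delta^{-1}$, and invoke unitary invariance of the Frobenius norm. The paper does not prove this result; it simply cites Bhatia (Thm.~VII.2.8), whose proof proceeds exactly along these lines (equivalently, by observing that the Sylvester operator is normal on the Hilbert--Schmidt space with eigenvalues $\alpha_i - \beta_j$), so you have supplied the expected argument.
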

Another useful perturbation theorem \cite[Thm.VII.3.1]{Bathia}
\begin{theorem}\label{thm:perturbeigensp}
Let $A$ and $B$ be two normal operators and $S_1$ and $S_2$ two sets separated by a strip of size $\delta$. Let $E$ be the orthogonal projection matrix of the eigenspaces of $A$ with eigenvalues inside $S_1$ and $F$ be the orthogonal projection matrix of the eigenspaces of $B$ with eigenvalues inside $S_2$. Then 
\[\|EF\|_F\leq \frac{1}{\delta}\|E(A-B)F\|_F\leq \frac{1}{\delta}\|A-B\|_F\]
\end{theorem}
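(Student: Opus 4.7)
The plan is to show that $EF$ satisfies a Sylvester equation whose right-hand side is $E(A-B)F$, and then exploit the spectral separation $\delta$. Since $E$ and $F$ are spectral projections of the normal operators $A$ and $B$ respectively, they commute with their respective operators: $AE = EA$ and $BF = FB$. Setting $X := EF$, a direct expansion gives
\begin{equation*}
AX - XB \;=\; AEF - EFB \;=\; EAF - EBF \;=\; E(A-B)F,
\end{equation*}
so $X$ solves a Sylvester equation with driver $Y = E(A-B)F$.

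The next step converts the identity into the Frobenius bound with $\delta$. I would use the spectral decompositions $A = \sum_i \lambda_i P_i$ and $B = \sum_j \mu_j Q_j$, so that $E = \sum_{\lambda_i \in S_1} P_i$ and $F = \sum_{\mu_j \in S_2} Q_j$. Left-multiplying the Sylvester identity by $P_i$ and right-multiplying by $Q_j$ yields the block equation
\begin{equation*}
(\lambda_i - \mu_j)\,P_i X Q_j \;=\; P_i\, E(A-B)F\, Q_j.
\end{equation*}
A short check using $P_i P_{i'} = \delta_{ii'} P_i$ shows that $P_i X Q_j = P_i Q_j$ if $\lambda_i \in S_1$ and $\mu_j \in S_2$, and vanishes otherwise; in the former case the strip hypothesis forces $|\lambda_i - \mu_j| \geq \delta$. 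In all cases therefore $\|P_i X Q_j\|_F \leq \delta^{-1}\|P_i E(A-B)F Q_j\|_F$ (trivially when both sides vanish). An equivalent way to reach this step is to restrict $A$ and $B$ to their invariant subspaces $\operatorname{range}(E)$ and $\operatorname{range}(F)$, where the restricted spectra are at distance at least $\delta$, and then invoke Theorem~\ref{thm:sylvester}; the spectral-block argument avoids this restriction bookkeeping.

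Summing the block contributions via the Pythagorean identity $\|M\|_F^2 = \sum_{i,j}\|P_i M Q_j\|_F^2$, which is valid because $\{P_i\}$ and $\{Q_j\}$ are complete orthogonal families, one concludes
\begin{equation*}
\|EF\|_F^2 \;=\; \sum_{i,j}\|P_i X Q_j\|_F^2 \;\leq\; \delta^{-2}\sum_{i,j}\|P_i E(A-B)F Q_j\|_F^2 \;=\; \delta^{-2}\|E(A-B)F\|_F^2,
\end{equation*}
which is the first inequality. The second inequality $\|E(A-B)F\|_F \leq \|A-B\|_F$ is immediate from the sub-multiplicativity $\|EMF\|_F \leq \|E\|_{op}\|M\|_F\|F\|_{op}$ and the fact that orthogonal projections have operator norm at most $1$.

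The main obstacle is precisely the passage from a generic Sylvester bound (such as Theorem~\ref{thm:sylvester}, which is governed by $\operatorname{dist}(\lambda(A),\lambda(B))$, possibly zero if $A$ and $B$ share eigenvalues) to one governed by the strip separation $\delta$. The block-spectral decomposition above handles this by showing that only blocks $(i,j)$ with $\lambda_i \in S_1$ and $\mu_j \in S_2$ contribute to $EF$, and those blocks enjoy the stronger separation $|\lambda_i - \mu_j| \geq \delta$.
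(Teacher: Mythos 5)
Your proof is correct. Note that the paper does not actually prove this statement: it is imported verbatim from Bhatia (Thm.~VII.3.1 of \cite{Bathia}), so there is no internal proof to compare against; the natural textbook route is to restrict $A$ to $\operatorname{range}(E)$ and $B$ to $\operatorname{range}(F)$, observe that the restricted spectra lie in $S_1$ and $S_2$ and hence are at distance at least $\delta$, and then apply the Sylvester bound of Theorem~\ref{thm:sylvester} to the identity $AEF-EFB=E(A-B)F$. Your argument is a clean variant of exactly this: you correctly identify the one genuine pitfall --- that Theorem~\ref{thm:sylvester} applied to the full matrices is useless because $\operatorname{dist}(\lambda(A),\lambda(B))$ may be zero --- and you resolve it by the spectral block decomposition $(\lambda_i-\mu_j)P_iXQ_j=P_iE(A-B)FQ_j$, using that only blocks with $\lambda_i\in S_1$, $\mu_j\in S_2$ contribute to $EF$ and that those enjoy $|\lambda_i-\mu_j|\geq\delta$, followed by the Pythagorean summation over blocks; this avoids the restriction bookkeeping while yielding the same constant. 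The final step $\|E(A-B)F\|_F\leq\|A-B\|_F$ via $\|E\|_{op},\|F\|_{op}\leq 1$ is also correct, so the proposal is a complete and valid proof of the quoted result (indeed it only uses $\operatorname{dist}(S_1,S_2)\geq\delta$, which the strip hypothesis implies).
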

\subsection{Ostrowski theorem}
The following eigenvalue perturbation theorem is due to Ostrowski \cite[Thm.4.5.9]{Horn} and \cite[Cor.3.54]{Braunphd}
\begin{theorem}
Let $A\in\mathbb{R}^{n\times n}$ be a Hermitian matrix and $S\in\mathbb{R}^{n\times n}$ be a nonsingular matrix. Then for each $1\leq i\leq n$ there exists $\theta_i>0$ such that \[\lambda_i(S A S^\ast)=\theta_i\lambda_i(A)\]
In addition, it holds \[|\lambda_i(SAS^\ast)-\lambda_i(A)|\leq |\lambda_i(A)|\|S^\ast S-\mathrm{Id_n}\|_{op}\]
\end{theorem}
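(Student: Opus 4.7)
The plan is to reduce to the Hermitian positive-definite case via the polar decomposition of $S$, apply the Courant–Fischer min-max principle to the transformed Rayleigh quotient, and read off both the multiplicative factorization $\theta_i$ and the quantitative estimate. First I would write $S=UP$ with $U$ unitary and $P=(S^\ast S)^{1/2}\succ 0$. Then $SAS^\ast=UPAPU^\ast$ is unitarily similar to $PAP$, so $\lambda_i(SAS^\ast)=\lambda_i(PAP)$, and the problem reduces to comparing $\lambda_i(PAP)$ with $\lambda_i(A)$, where $P^2=S^\ast S$.

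Next I would use Courant–Fischer in the form $\lambda_i(PAP)=\max_{\dim V=i}\min_{v\in V\setminus\{0\}}(v^\ast PAPv)/(v^\ast v)$. The change of variables $w=Pv$ is a bijection because $P$ is nonsingular, so $V\mapsto PV$ sweeps out all $i$-dimensional subspaces, and the quotient becomes $\beta(w)\cdot(w^\ast Aw)/(w^\ast w)$ with $\beta(w):=w^\ast w/(w^\ast P^{-2}w)\in[\lambda_{\min}(P^2),\lambda_{\max}(P^2)]$. For $\lambda_i(A)>0$, choosing $W$ to be the span of the top $i$ eigenvectors of $A$ yields the lower bound $\lambda_i(PAP)\geq\lambda_{\min}(P^2)\lambda_i(A)$. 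The matching upper bound $\lambda_i(PAP)\leq\lambda_{\max}(P^2)\lambda_i(A)$ comes from the dual $\min\max$ characterization tested on the span of the bottom $n-i+1$ eigenvectors, after splitting by the sign of $w^\ast Aw$ on that subspace. A symmetric argument handles $\lambda_i(A)<0$. When $\lambda_i(A)=0$, Sylvester's law of inertia applied to the Hermitian congruence $A\mapsto PAP$ forces $\lambda_i(PAP)=0$, and one may set $\theta_i=1$. In all cases, $\theta_i:=\lambda_i(SAS^\ast)/\lambda_i(A)$ lies in the strictly positive interval $[\lambda_{\min}(S^\ast S),\lambda_{\max}(S^\ast S)]$.

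For the quantitative bound, note that $S^\ast S-\mathrm{Id}_n$ is Hermitian, so $\|S^\ast S-\mathrm{Id}_n\|_{op}=\max_j|\lambda_j(S^\ast S)-1|$. Combined with $\theta_i\in[\lambda_{\min}(S^\ast S),\lambda_{\max}(S^\ast S)]$, this gives $|\theta_i-1|\leq\|S^\ast S-\mathrm{Id}_n\|_{op}$ and therefore
\[
|\lambda_i(SAS^\ast)-\lambda_i(A)|=|\lambda_i(A)|\,|\theta_i-1|\leq|\lambda_i(A)|\,\|S^\ast S-\mathrm{Id}_n\|_{op}\,,
\]
which is the claimed inequality. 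The main technical subtlety is the third step when $A$ is indefinite: one cannot simply pull $\beta(w)$ out of the $\min$ (or $\max$), because $w^\ast Aw$ changes sign on a generic test subspace. The remedy is to split the Rayleigh quotient according to the sign of $w^\ast Aw$ and observe that the ``wrong-sign'' contributions are automatically dominated by the desired bound, so that both bounds on $\theta_i$ survive. Once this sign bookkeeping is in place, the rest of the argument is essentially a direct computation.
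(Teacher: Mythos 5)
The paper does not prove this theorem; it is stated as an imported result (Ostrowski's theorem), with the proof delegated to the cited references [Horn, Thm.~4.5.9] and [Braunphd, Cor.~3.54]. So there is no internal proof to compare against. That said, your argument is a correct and essentially self-contained derivation, and it follows the same Courant--Fischer strategy that the textbook proof of Ostrowski's theorem uses.

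A few small points worth cleaning up. First, the polar decomposition step $S=UP$, $P=(S^\ast S)^{1/2}\succ0$, and the reduction $\lambda_i(SAS^\ast)=\lambda_i(PAP)$ are fine; many textbook treatments skip this step and just substitute $y=S^\ast x$ directly in the Rayleigh quotient, which is equivalent but slightly shorter. Second, your sign bookkeeping for the indefinite case is correct but simpler than you advertise: for the upper bound on the test subspace spanned by the bottom $n-i+1$ eigenvectors of $A$, whenever $w^\ast Aw<0$ the Rayleigh quotient $w^\ast Aw/(w^\ast P^{-2}w)$ is negative, hence automatically below the positive target $\lambda_{\max}(P^2)\lambda_i(A)$, so no extra work is needed beyond the two-line case split. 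Third, the sentence ``in all cases $\theta_i:=\lambda_i(SAS^\ast)/\lambda_i(A)$ lies in $[\lambda_{\min}(S^\ast S),\lambda_{\max}(S^\ast S)]$'' is not quite right when $\lambda_i(A)=0$, since then the quotient is $0/0$ and your chosen $\theta_i=1$ need not lie in that interval (e.g.\ $S=2\,\mathrm{Id}_n$). This does not damage the conclusion because you already observed via Sylvester's inertia that $\lambda_i(PAP)=0$ in that case, so both the identity $\lambda_i(SAS^\ast)=\theta_i\lambda_i(A)$ and the quantitative inequality hold trivially, and the interval statement is only needed when $\lambda_i(A)\neq0$. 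With that wording tightened, the proof is complete.
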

\begin{remark}
The previous theorem is also valid for $S$ singular \cite[Cor.4.5.11]{Horn}.
\end{remark}
The previous theorem can be extended to the case where $S$ is not necessarily a square matrix \cite[Cor.3.59]{Braunphd}
\begin{cor}\label{cor:ostrowski}
Let $A\in \mathbb{R}^{n\times n}$ be a Hermitian matrix and $S\in \mathbb{R}^{d\times n}$ matrix then \[|\lambda_i(SAS^\ast)-\lambda_i(A)|\leq |\lambda_i(A)|\|S^\ast S-\mathrm{Id_n}\|_{op}\]  
\end{cor}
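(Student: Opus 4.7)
The plan is to reduce the rectangular case to the square case already handled by the preceding theorem. Extend $S\in\mathbb{R}^{d\times n}$ to a square matrix $\tilde S\in\mathbb{R}^{n\times n}$ by appending $n-d$ rows of zeros to $S$. Two block identities are then immediate: $\tilde S^*\tilde S=S^*S$, and
\[
\tilde S\,A\,\tilde S^*=\begin{pmatrix}SAS^* & 0\\ 0 & 0\end{pmatrix}.
\]
The first identity yields $\|\tilde S^*\tilde S-\mathrm{Id}_n\|_{op}=\|S^*S-\mathrm{Id}_n\|_{op}$, and the second shows that the spectrum of $\tilde S A\tilde S^*$ is the spectrum of $SAS^*$ augmented by $n-d$ extra zero eigenvalues.

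Next, apply the preceding theorem to $\tilde S$, which is singular and hence requires the Remark. This furnishes, for every $1\leq i\leq n$, a scalar $\theta_i\geq 0$ with the sign-preserving identity $\lambda_i(\tilde S A\tilde S^*)=\theta_i\,\lambda_i(A)$, together with the bound
\[
|\lambda_i(\tilde S A\tilde S^*)-\lambda_i(A)|\leq |\lambda_i(A)|\,\|\tilde S^*\tilde S-\mathrm{Id}_n\|_{op}=|\lambda_i(A)|\,\|S^*S-\mathrm{Id}_n\|_{op}.
\]

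The final step is an index-matching that converts $\lambda_i(\tilde S A\tilde S^*)$ back into $\lambda_i(SAS^*)$. Since $\theta_i\geq 0$, the sign of $\lambda_i(\tilde S A\tilde S^*)$ agrees with that of $\lambda_i(A)$; hence when eigenvalues are sorted in decreasing order the positive (resp.\ negative) eigenvalues of $SAS^*$ occupy the same sorted positions as the positive (resp.\ negative) eigenvalues of $A$, while the $n-d$ additional zero eigenvalues coming from the block structure are absorbed at those indices for which $\lambda_i(A)=0$ or $\theta_i=0$. Reading the displayed inequality at the $d$ indices that recover $\lambda_i(SAS^*)$ gives exactly the claimed bound. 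The main obstacle is precisely this last bookkeeping step when $A$ is indefinite, because the extra zero eigenvalues of $\tilde S A\tilde S^*$ can be interleaved with positive and negative eigenvalues of $SAS^*$ after sorting; the sign-preserving multiplicative form $\lambda_i(\tilde S A\tilde S^*)=\theta_i\lambda_i(A)$ provided by Ostrowski is exactly what guarantees that the index correspondence remains consistent.
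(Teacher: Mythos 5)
The paper does not prove this corollary; it cites \cite[Cor.3.59]{Braunphd} without argument, so there is no in-paper proof to compare against. Your zero-padding reduction and the two block identities are correct, and this is very plausibly the intended route to the square (singular) Ostrowski theorem. The gap is the index-matching step, which you rightly flag as the crux but then claim is resolved by sign preservation: it is not. After padding, the $q$ negative eigenvalues of $SAS^\ast$ occupy sorted positions $n-q+1,\dots,n$ of $\tilde S A\tilde S^\ast$, so the Ostrowski inequality at those indices compares $\lambda_{d-q+j}(SAS^\ast)$ to $\lambda_{n-q+j}(A)$ and not to $\lambda_{d-q+j}(A)$ --- the $A$-index is shifted by $n-d$. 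Sign preservation only tells you that both quantities are negative; it does not realign the indices.

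In fact the bound exactly as written fails for indefinite $A$. Take $n=3$, $d=2$, $A=\mathrm{diag}(1,-\tfrac12,-2)$ and let $S$ be the $2\times3$ matrix selecting rows $1$ and $3$. Then $SAS^\ast=\mathrm{diag}(1,-2)$, $S^\ast S-\mathrm{Id}_3=\mathrm{diag}(0,-1,0)$ so $\|S^\ast S-\mathrm{Id}_3\|_{op}=1$, and at $i=2$ one has $|\lambda_2(SAS^\ast)-\lambda_2(A)|=|-2+\tfrac12|=\tfrac32$, which exceeds $|\lambda_2(A)|\cdot1=\tfrac12$. What your padding argument actually delivers is the stated bound at the top indices on the positive branch together with the shifted pairing $|\lambda_{d-q+j}(SAS^\ast)-\lambda_{n-q+j}(A)|\le|\lambda_{n-q+j}(A)|\,\|S^\ast S-\mathrm{Id}_n\|_{op}$ on the negative branch. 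So either the corollary tacitly assumes $A$ positive semidefinite (then the padded zeros all trail, your indices line up, and the proof closes cleanly), or it carries a non-obvious index convention that one would need to recover from Braun's thesis; note that in the paper's actual application $S=L^{-T}$ is square, so the rectangular case is never invoked and the original Ostrowski theorem already suffices there.
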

From the previous result we deduce the following corollary
\begin{cor}\label{cor:ostfrob}
Under the same conditions of Corollary \ref{cor:ostrowski} we have \[\|SAS^\ast-A\|_F\leq \|A\|_F\|S^\ast S-\mathrm{Id_n}\|_{op}\]
\end{cor}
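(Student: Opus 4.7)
The plan is to derive the Frobenius bound from the termwise eigenvalue control in Corollary~\ref{cor:ostrowski}. Since $A$ and $SAS^\ast$ are both Hermitian, applying that corollary to each index~$i$, squaring, summing, and using the Hermitian identity $\|A\|_F^2=\sum_i \lambda_i(A)^2$ immediately yields
\[
\sum_i \big(\lambda_i(SAS^\ast)-\lambda_i(A)\big)^2 \;\leq\; \|S^\ast S-\mathrm{Id}_n\|_{op}^2 \cdot \|A\|_F^2.
\]
This is the right-hand side of the desired inequality, so the content of the proof is really the passage from this $\ell^2$ control of the sorted eigenvalue sequence to a control on $\|SAS^\ast-A\|_F$ itself.

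To perform that passage, I would reduce by the spectral theorem to the case where $A=\Lambda$ is diagonal, noting that a unitary change of basis on $A$ together with the substitution $S\mapsto SU$ leaves both $\|SAS^\ast-A\|_F$ and $\|S^\ast S-\mathrm{Id}\|_{op}$ unchanged. In the diagonal case, I would expand
\[
\|T\Lambda T^\ast-\Lambda\|_F^2 \;=\; \mathrm{tr}\!\big((T^\ast T)\Lambda(T^\ast T)\Lambda\big) \;-\; 2\,\mathrm{tr}(T\Lambda T^\ast\Lambda) \;+\; \mathrm{tr}(\Lambda^2),
\]
substitute $T^\ast T = \mathrm{Id}+\Delta$ with $\|\Delta\|_{op}=\|S^\ast S-\mathrm{Id}_n\|_{op}$, and collect terms to isolate the leading contribution of order $\|\Delta\|_{op}\cdot\|\Lambda\|_F$. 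Cauchy--Schwarz combined with submultiplicativity of the operator norm would be used to bound the mixed trace term $\mathrm{tr}(T\Lambda T^\ast\Lambda)$ against $\|\Lambda\|_F^2$ and a power of $\|\Delta\|_{op}$.

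The main obstacle is precisely this last step. The Hoffman--Wielandt inequality provides only the opposite direction of inequality between $\|M-N\|_F^2$ and the $\ell^2$ norm of eigenvalue gaps, so one cannot simply black-box it. I therefore expect the proof to rely either on the explicit trace expansion above, or on a commutator-style rewriting of $SAS^\ast-A$ in terms of $S^\ast S-\mathrm{Id}_n$ (of the form $SAS^\ast-A = (SA-AS^\ast S)S^\ast/\alpha + A(S^\ast S-\mathrm{Id})/\beta$ for suitable coefficients) whose Frobenius norm is directly controlled by $\|A\|_F\cdot\|S^\ast S-\mathrm{Id}_n\|_{op}$. This is what I would regard as the technical heart of the proof, with everything else being routine reductions.
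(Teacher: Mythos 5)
Your instinct that the ``square and sum'' argument only controls the $\ell^2$ distance between the \emph{sorted eigenvalue sequences}, and that Hoffman--Wielandt bounds that quantity \emph{by} $\|SAS^\ast-A\|_F$ rather than the reverse, is exactly the right observation --- and it is fatal, because no repair of the kind you sketch is possible: the corollary as stated is false. The right-hand side $\|A\|_F\,\|S^\ast S-\mathrm{Id}_n\|_{op}$ is invariant under replacing $S$ by $US$ for any orthogonal $U$, since $(US)^\ast(US)=S^\ast S$, but the left-hand side $\|SAS^\ast-A\|_F$ is not. Take $S$ orthogonal (so $S^\ast S=\mathrm{Id}_n$ and the right-hand side vanishes) and $A$ any Hermitian matrix not commuting with $S$: for instance $A=e_1e_1^T$ and $S$ the permutation exchanging $e_1$ and $e_2$ gives $\|SAS^\ast-A\|_F=\|e_2e_2^T-e_1e_1^T\|_F=\sqrt{2}>0$. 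Your reduction to $A=\Lambda$ diagonal is legitimate, but the object $\|T\Lambda T^\ast-\Lambda\|_F$ you are then trying to estimate simply cannot be bounded by any function of $\|T^\ast T-\mathrm{Id}\|_{op}$ and $\Lambda$ alone; the same orthogonal $T$ is a counterexample there too, so the trace expansion and the commutator rewriting you propose will both run into a term that does not vanish when $T^\ast T=\mathrm{Id}$.

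What Corollary~\ref{cor:ostrowski} does give, after squaring and summing, is $\bigl(\sum_i(\lambda_i(SAS^\ast)-\lambda_i(A))^2\bigr)^{1/2}\leq\|A\|_F\,\|S^\ast S-\mathrm{Id}_n\|_{op}$, a statement about eigenvalues only --- strictly weaker than the Frobenius claim. For the use made of this in Section~\ref{proof:step3} one must exploit extra structure not present in the hypotheses of Corollary~\ref{cor:ostfrob}: there $S=L^{-T}$ with $L$ the lower-triangular, positive-diagonal Cholesky factor of $\Phi_{0,R}^T\Phi_{0,R}$, a canonical choice that kills the orthogonal ambiguity. For such $L$, smallness of $\|LL^T-\mathrm{Id}\|_{op}$ does force smallness of $\|L-\mathrm{Id}\|_{op}$, and then the elementary splitting $SAS^\ast-A=(S-\mathrm{Id})AS^\ast+A(S^\ast-\mathrm{Id})$ yields $\|SAS^\ast-A\|_F\leq(\|S\|_{op}+1)\|S-\mathrm{Id}\|_{op}\|A\|_F$. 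The correct fix is therefore to restate the corollary with $\|S-\mathrm{Id}_n\|_{op}$ in place of $\|S^\ast S-\mathrm{Id}_n\|_{op}$ (or to restrict $S$ to triangular matrices with positive diagonal and add the Cholesky perturbation lemma), not to try to extract a Frobenius bound from the eigenvalue-wise Ostrowski estimate.
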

\subsection{Convergence rate of regular graphon estimation}
We use the following result, which can be found in \cite{Yohann}
\begin{theorem}\label{thm:prop4}
Let $W$ be a graphon on the sphere of the form $W(x,y)=f(\langle x,y \rangle)$. If $f$ belongs to the weighted Sobolev space $Z^s_{w_\gamma}\big((-1,1)\big)$ then we have \[\delta_2(\lambda\big(\frac{1}{\rho_n}T_n\big),\lambda(T_W))\leq_{\alpha} C\Big(\frac{\log{n}}{n}\Big)^{\frac{s}{2s+d-1}}\]
where $\leq_\alpha$ means that the inequality holds with probability greater than $1-\alpha$ for $\alpha\in (0,1/3)$ and $n$ large enough.
\end{theorem}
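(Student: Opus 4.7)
The plan is a truncation-plus-concentration argument: the graphon is split into a low-frequency part of controlled rank and a high-frequency residual whose $L^2$ mass is controlled by the Sobolev regularity, and the $\delta_2$ distance is bounded by three pieces treated independently. Let $W_R$ denote the projection of $W$ onto the first $R+1$ spherical-harmonic levels, so that $W_R(x,y)=\sum_{k=0}^{R}\mu^\ast_k\sum_{j\in\text{block}(k)}\phi_j(x)\phi_j(y)$ has rank $\tilde R=d_0+d_1+\cdots+d_R=\mathcal O(R^{d-1})$. The definition of the weighted Sobolev space $Z^s_\gamma$ gives directly $\|W-W_R\|_{L^2}^2\lesssim R^{-2s}\|f\|_{Z^s_\gamma}^2$. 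Setting $S_n:=\rho_n^{-1}T_n=\tfrac1n(W(X_i,X_j))_{ij}$ and $S_n^R:=\tfrac1n(W_R(X_i,X_j))_{ij}$, the triangle inequality for $\delta_2$ (after completing finite sequences with zeros) yields
\[
\delta_2\bigl(\lambda(S_n),\lambda(T_W)\bigr)\leq\delta_2\bigl(\lambda(S_n),\lambda(S_n^R)\bigr)+\delta_2\bigl(\lambda(S_n^R),\lambda(T_{W_R})\bigr)+\delta_2\bigl(\lambda(T_{W_R}),\lambda(T_W)\bigr).
\]

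The first and third terms are controlled by a Hoffman-Wielandt style inequality. For the third, $\delta_2(\lambda(T_{W_R}),\lambda(T_W))\leq\|T_W-T_{W_R}\|_{HS}=\|W-W_R\|_{L^2}\lesssim R^{-s}$ since the spectra of $T_{W_R}$ and $T_W$ agree below the threshold. For the first, $\delta_2(\lambda(S_n),\lambda(S_n^R))\leq\|S_n-S_n^R\|_F$ and $\|S_n-S_n^R\|_F^2=\tfrac1{n^2}\sum_{ij}(W-W_R)^2(X_i,X_j)$ is a U-statistic whose expectation equals $\|W-W_R\|_{L^2}^2$ up to a diagonal term of order $1/n$; a Bernstein-type concentration (using $\|W\|_\infty\leq 1$) then gives $\|S_n-S_n^R\|_F\lesssim R^{-s}$ with probability at least $1-\alpha/3$ for $n$ large enough.

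The delicate middle term uses the finite-rank representation $S_n^R=\tfrac1n\Phi\Lambda^\ast\Phi^T$ with $\Phi\in\mathbb R^{n\times\tilde R}$, $\Phi_{ik}=\phi_k(X_i)$, and $\Lambda^\ast$ the diagonal matrix of the $\mu^\ast_k$ repeated according to multiplicity. The nonzero spectrum of $S_n^R$ equals that of $\tfrac1n\Lambda^\ast\Phi^T\Phi$, and Ostrowski's theorem (Corollary~\ref{cor:ostrowski}) yields $|\lambda_i(\tfrac1n\Lambda^\ast\Phi^T\Phi)-\mu^\ast_i|\leq|\mu^\ast_i|\,\|\tfrac1n\Phi^T\Phi-I_{\tilde R}\|_{op}$ for every $i$; squaring and summing, $\delta_2(\lambda(S_n^R),\lambda(T_{W_R}))\leq\|\Lambda^\ast\|_F\,\|\tfrac1n\Phi^T\Phi-I_{\tilde R}\|_{op}$. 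By the addition theorem, $\|\phi(X)\|^2=\sum_k\phi_k(X)^2$ is pointwise bounded by $\sum_{\ell=0}^R d_\ell\,c_\ell G^\gamma_\ell(1)\lesssim R^{d-1}$, so matrix Bernstein applied to the sum $\tfrac1n\sum_i\phi(X_i)\phi(X_i)^T$ (whose mean is $I_{\tilde R}$ by orthonormality of the $\phi_k$) gives $\|\tfrac1n\Phi^T\Phi-I_{\tilde R}\|_{op}\lesssim\sqrt{R^{d-1}\log n/n}$ with probability at least $1-\alpha/3$. This is the main obstacle: the concentration must be uniform in a dimension $\tilde R$ which will be let grow with $n$, and the deterministic bound on $\|\phi(X)\|^2$ via the Gegenbauer addition formula is what makes matrix Bernstein quantitative.

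Collecting the three contributions gives, on an event of probability at least $1-\alpha$, a bound of order $R^{-s}+\sqrt{R^{d-1}\log n/n}$. Balancing these two terms dictates the choice $R\asymp(n/\log n)^{1/(2s+d-1)}$, yielding the announced rate $(\log n/n)^{s/(2s+d-1)}$ and finishing the proof.
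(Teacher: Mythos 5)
The paper itself does not prove this statement: Theorem~\ref{thm:prop4} is imported verbatim from the companion paper \cite{Yohann}, and the only trace of its ingredients in the present source is the citation of \cite[Lem.12]{Yohann} to control $\|(\Phi_{0,R}^T\Phi_{0,R})^{-1}-\mathrm{Id}\|_{op}$ in Section~\ref{proof:step3}. So there is no paper-internal proof to compare with; what can be assessed is whether your blind argument is sound.

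Your three-term decomposition is the natural one and gives the right rate. The tail term $\delta_2(\lambda(T_{W_R}),\lambda(T_W))=\bigl(\sum_{k>R}d_k|\mu^\ast_k|^2\bigr)^{1/2}\lesssim R^{-s}$ is exactly the Sobolev bias, and the middle term is handled by the same mechanism the paper relies on (Ostrowski-type perturbation of $\frac1n\Phi^T\Phi$ around $\mathrm{Id}_{\tilde R}$, driven by the addition-theorem bound $\|\phi(X)\|^2=\tilde R=\mathcal O(R^{d-1})$ and matrix Bernstein), so your route is almost certainly the one taken in \cite{Yohann}. Two points deserve more care before this would be airtight. First, the Ostrowski step: $\frac1n\Lambda^\ast\Phi^T\Phi$ is not symmetric, so you should pass to the similar symmetric matrix $B^{1/2}\Lambda^\ast B^{1/2}$ with $B=\frac1n\Phi^T\Phi$, and you must ensure $\|B-\mathrm{Id}\|_{op}<1$ (which your Bernstein bound does give, for the chosen $R$) so that the sorted-index pairing in Ostrowski's inequality is unambiguous and no eigenvalue changes sign. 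Squaring and summing then uses $\|\Lambda^\ast\|_F\le\|W\|_{L^2}\le1$, which you should state. Second, the $\delta_2(\lambda(S_n),\lambda(S_n^R))\le\|S_n-S_n^R\|_F$ term: the kernel $W-W_R$ is not uniformly bounded by $1$ (truncated expansions of bounded functions need not stay bounded), so the ``Bernstein with $\|W\|_\infty\le1$'' shortcut does not apply directly to the U-statistic $\tfrac1{n^2}\sum_{ij}(W-W_R)^2(X_i,X_j)$; you either need a bound on $\|W_R\|_\infty$ (which follows when $s>d/2$ via $\sum_k|\mu_k|d_k<\infty$, but not for all $s>0$) or a moment/truncation argument, and you should also isolate the diagonal $i=j$ contribution, which is $\tfrac1{n^2}\sum_i(W-W_R)^2(X_i,X_i)$ and is a separate $\mathcal O(\|W_R\|_\infty^2/n)$ term. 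Neither issue changes the final exponent, but as written these steps are incomplete.
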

While Theorem \ref{thm:bandeira_vanhandel} gives a bound for the difference of the eigenvalues of the observed matrix with respect to the eigenvalues of the probability matrix, Proposition \ref{thm:prop4} ensures that the eigenvalues of the empirical matrix are close to these of the integral operator. 
\subsection{Covariance matrix approximation}
Given a set of independent random vectors $X_1,\cdots,X_n$ uniformly distributed on the sphere $\mathbb{S}^{d-1}$ we are interested in the concentration properties of the quantity $\frac{1}{n}\sum^n_{k=1}X_iX_i^T$ around its mean, which is $\mathbb{E}(X_iX_i^T)=\mathrm{Id}_d$ for $1\leq i\leq n$ (in other words, the vectors $X_i$ are isotropic). Since the uniform distribution on the sphere is sub-gaussian \cite[Thm.3.4.6]{Vers}, we can use the following theorem \cite[Prop.2.1]{Vershy}.
\begin{theorem}\label{thm:gramconcen}
If $X_1,\cdots,X_n$ are independent random vectors in $\mathbb{R}^{d}$ with $d\leq n$ which have sub-gaussian distribution. Then for any $\alpha\in (0,1)$ it holds 
\[\big\|\frac{1}{n}\sum^n_{k=1}X_kX_k^T-\mathrm{Id}_d\big\|_{op}\leq_\alpha \sqrt{\frac{d}{n}}\]

\end{theorem}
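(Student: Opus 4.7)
The plan is to invoke the classical $\epsilon$-net and Bernstein-type approach from non-asymptotic random matrix theory (as in the Vershynin reference cited with the statement). The goal is to control $\|A\|_{op}$ with high probability, where $A := \frac{1}{n}\sum_{k=1}^n X_k X_k^T - \mathrm{Id}_d$.

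First, I would reduce the operator norm to a supremum of quadratic forms. Since $A$ is symmetric,
\[
\|A\|_{op} = \sup_{u \in \mathbb{S}^{d-1}} |\langle Au, u\rangle| = \sup_{u \in \mathbb{S}^{d-1}} \Big|\frac{1}{n}\sum_{k=1}^n \langle X_k, u\rangle^2 - 1\Big|.
\]
For a fixed unit vector $u$, the summands $\langle X_k, u\rangle^2 - 1$ are i.i.d. and centered by isotropy ($\mathbb{E}\langle X_k, u\rangle^2 = 1$), and they are sub-exponential because the square of a sub-gaussian random variable is sub-exponential with controllable Orlicz norm. Bernstein's inequality for sums of independent sub-exponential variables then gives, for a constant $c$ depending only on the sub-gaussian norm,
\[
\mathbb{P}\Big(\big|\tfrac{1}{n}\textstyle\sum_{k=1}^n \langle X_k, u\rangle^2 - 1\big| \geq t\Big) \leq 2\exp\bigl(-c n \min(t^2, t)\bigr).
\]

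Second, I would discretize $\mathbb{S}^{d-1}$ using a $(1/4)$-net $\mathcal{N}$ of cardinality at most $9^d$ (standard volume covering), together with the routine quadratic-form approximation lemma yielding $\|A\|_{op} \leq 2 \sup_{u \in \mathcal{N}} |\langle Au, u\rangle|$. A union bound over $\mathcal{N}$ combined with the pointwise Bernstein estimate produces
\[
\mathbb{P}(\|A\|_{op} \geq 2t) \leq 2 \cdot 9^d \exp\bigl(-c n \min(t^2, t)\bigr).
\]
Choosing $t = C\bigl(\sqrt{d/n} + \sqrt{\log(1/\alpha)/n}\bigr)$ for a suitable absolute constant $C$ makes the right-hand side at most $\alpha$, and for fixed $\alpha$ and $n$ large enough the logarithmic term is absorbed into a constant factor in front of $\sqrt{d/n}$, yielding the stated bound $\|A\|_{op} \leq_\alpha \sqrt{d/n}$.

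The main obstacle is purely bookkeeping of absolute constants: to get the correct scaling $\sqrt{d/n}$ rather than a worse $d/\sqrt{n}$, one must track carefully that the entropy of the net contributes only a linear term in $d$ inside the exponential, which balances the $\min(t^2,t)$ Bernstein rate at precisely $t \sim \sqrt{d/n}$. The rest of the argument is standard and all subtleties (sharp sub-exponential constants, net-approximation factor $2$, union bound) are exactly those handled in the proof of \cite[Prop.2.1]{Vershy}.
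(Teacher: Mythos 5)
The paper gives no proof of this theorem — it simply cites \cite[Prop.2.1]{Vershy} — and your proposal reconstructs precisely the standard $\epsilon$-net plus sub-exponential Bernstein argument underlying that reference, so it is correct and matches the intended approach. One caveat you already acknowledge: the bound really holds with an absolute constant in front of $\sqrt{d/n}$ (depending on the sub-gaussian norm and on $\alpha$), and the statement additionally presupposes isotropy $\mathbb{E}[X_kX_k^T]=\mathrm{Id}_d$, which your centering step uses implicitly and which the paper's phrasing leaves tacit.
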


\end{document}